%%%% ijcai24.tex

\typeout{FedPFT: Federated Proxy Fine-Tuning of Foundation Models}

% These are the instructions for authors for IJCAI-24.

\documentclass{article}
\pdfpagewidth=8.5in
\pdfpageheight=11in

% The file ijcai24.sty is a copy from ijcai22.sty
% The file ijcai22.sty is NOT the same as previous years'
\usepackage{ijcai24}

% Use the postscript times font!
\usepackage{times}
\usepackage{soul}
\usepackage{url}
\usepackage[hidelinks]{hyperref}
\usepackage[utf8]{inputenc}
\usepackage[small]{caption}
\usepackage{graphicx}
\usepackage{amsmath}
\usepackage{amsthm}
\usepackage{booktabs}
\usepackage{algorithm}
\usepackage{algorithmic}
\usepackage[switch]{lineno}
\usepackage{multirow}
\usepackage{subfigure}
\usepackage{makecell}
\usepackage{amssymb}
\usepackage{float}
\usepackage{xcolor}
\usepackage{color}
% Comment out this line in the camera-ready submission
%\linenumbers

\urlstyle{same}

% the following package is optional:
%\usepackage{latexsym}

% See https://www.overleaf.com/learn/latex/theorems_and_proofs
% for a nice explanation of how to define new theorems, but keep
% in mind that the amsthm package is already included in this
% template and that you must *not* alter the styling.

\newtheorem{theorem}{Theorem}

% Following comment is from ijcai97-submit.tex:
% The preparation of these files was supported by Schlumberger Palo Alto
% Research, AT\&T Bell Laboratories, and Morgan Kaufmann Publishers.
% Shirley Jowell, of Morgan Kaufmann Publishers, and Peter F.
% Patel-Schneider, of AT\&T Bell Laboratories collaborated on their
% preparation.

% These instructions can be modified and used in other conferences as long
% as credit to the authors and supporting agencies is retained, this notice
% is not changed, and further modification or reuse is not restricted.
% Neither Shirley Jowell nor Peter F. Patel-Schneider can be listed as
% contacts for providing assistance without their prior permission.

% To use for other conferences, change references to files and the
% conference appropriate and use other authors, contacts, publishers, and
% organizations.
% Also change the deadline and address for returning papers and the length and
% page charge instructions.
% Put where the files are available in the appropriate places.

% PDF Info Is REQUIRED.

% Please leave this \pdfinfo block untouched both for the submission and
% Camera Ready Copy. Do not include Title and Author information in the pdfinfo section
\pdfinfo{
/TemplateVersion (IJCAI.2024.0)
}

\title{FedPFT: Federated Proxy Fine-Tuning of Foundation Models}

% Single author syntax
\iffalse
\author{
    Author Name
    \affiliations
    Affiliation
    \emails
    email@example.com
}
\fi
%Multiple author syntax (remove the single-author syntax above and the \iffalse ... \fi here)

\author{
Zhaopeng Peng$^1$\and
Xiaoliang Fan$^{1,}$\footnote{Corresponding Author}\and
Yufan Chen$^{1}$\and
Zheng Wang$^1$\and
Shirui Pan$^2$\and
Chenglu Wen$^1$\and
Ruisheng Zhang$^3$\and
Cheng Wang$^1$
\\
\affiliations
$^1$Fujian Key Laboratory of Sensing and Computing for Smart Cities, School of Informatics, Xiamen University\\
$^2$School of Information and Communication Technology, Griffith University\\
$^3$School of Information Science and Engineering, Lanzhou University\\
\emails
pengzhaopeng@stu.xmu.edu.cn,
fanxiaoliang@xmu.edu.cn,
\{23020231154174, zwang\}@stu.xmu.edu.cn,
s.pan@griffith.edu.au,
clwen@xmu.edu.cn,
zhangrs@lzu.edu.cn,
cwang@xmu.edu.cn 
}

\begin{document}
    \maketitle

\begin{abstract}
Adapting Foundation Models (FMs) for downstream tasks through Federated Learning (FL) emerges a promising strategy for protecting data privacy and valuable FMs. Existing methods fine-tune FM by allocating sub-FM to clients in FL, however, leading to suboptimal performance due to insufficient tuning and inevitable error accumulations of gradients. In this paper, we propose Federated Proxy Fine-Tuning (FedPFT), a novel method enhancing FMs adaptation in downstream tasks through FL by two key modules. 
First, the sub-FM construction module employs a layer-wise compression approach, facilitating comprehensive FM fine-tuning across all layers by emphasizing those crucial neurons. 
Second, the sub-FM alignment module conducts a two-step distillations—layer-level and neuron-level—before and during FL fine-tuning respectively, to reduce error of gradient by accurately aligning sub-FM with FM under theoretical guarantees. 
%Second, the sub-FM alignment module conducts a bi-level (e.g. layer-level and neuron-level) knowledge distillation to reduce error of gradient, ensuring the accurate alignment between sub-FM and FM with a theoretical guarantee. 
% Second, we reduce the error of gradients by aligning corresponding modules in subFM and FM via bi-level (e.g. layer-level and neuron-level) knowledge distillation
Experimental results on seven commonly used datasets (i.e., four text and three vision) demonstrate the superiority of FedPFT. Ours code is at https://github.com/pzp-dzd/FedPFT.

\end{abstract}

\section{Introduction}

\begin{figure*}
\centering
    \subfigure[Two types of proxy sub-FM construction]{
    \centering
    \includegraphics[scale=0.37]{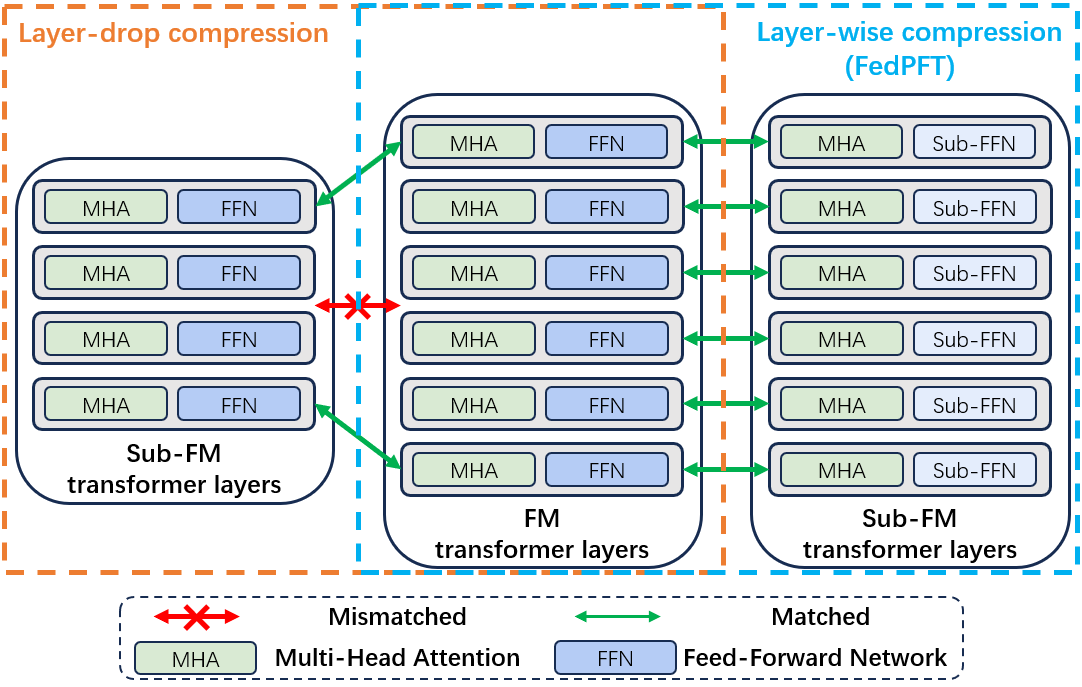}
    \label{fig1_a}
    }
    \subfigure[The problem of accumulating gradients errors]{
    \centering
    \includegraphics[scale=0.253]{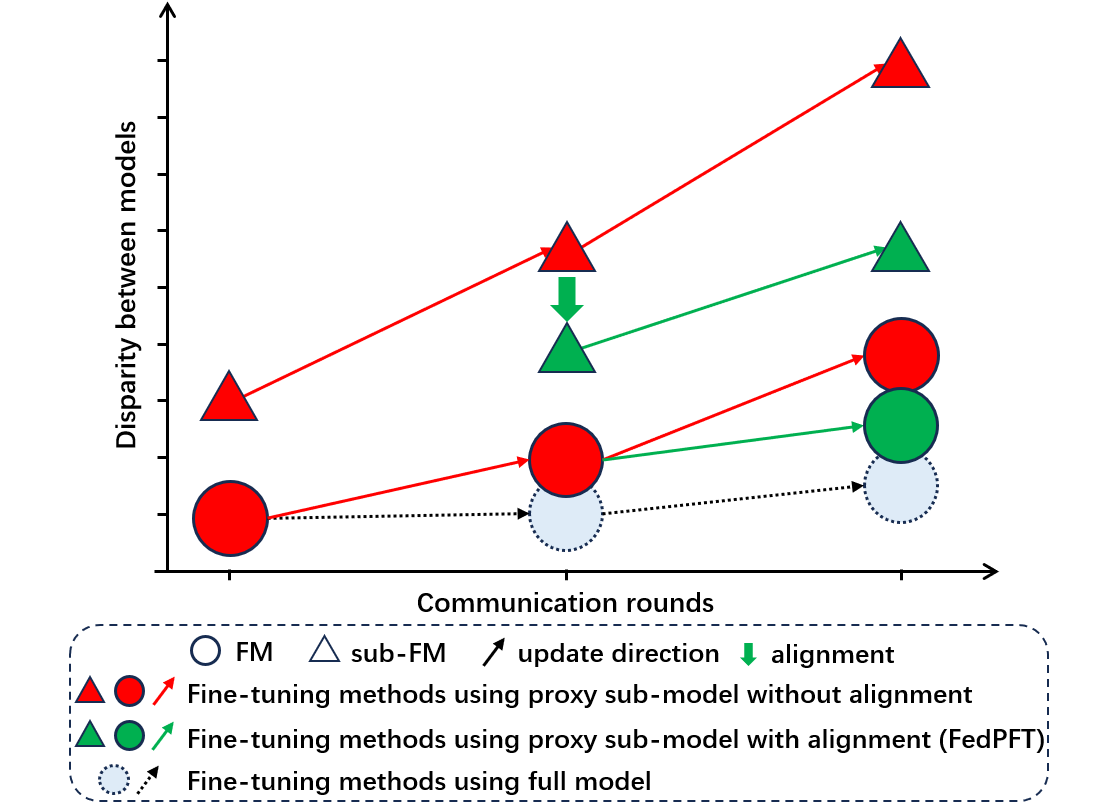}
    \label{fig1_b}
    }
    \caption{A motivating example of two challenges in FM fine-tuning using proxy sub-model. (a) 
    %Existing methods constructing sub-FMs based on layer-drop compression discard significant amount of intermediate layers in FM thus leading to mismatched and insufficient FM fine-turning, while our FedPFT conducting layer-wise compression ensures the comprehensive fine-tuning of all FM layers
    Existing methods constructing sub-FMs via layer-drop compression discard intermediate layers in FM, causing mismatched and insufficient fine-tuning, while FedPFT conducting layer-wise compression ensures comprehensive fine-tuning of FM; and (b) as FL fine-tuning progresses, the discrepancy between the updates made by sub-FMs and FMs grows, leading to a deviation from the ideal update direction, while FedPFT aims to mitigate this gap by accurately aligning sub-FMs and FMs.}
    %fine-tuning FM using proxy sub-model inevitably accumulate gradients errors due to the absence of alignment between FM and its sub-FM.}
\end{figure*}
%1. 在FL下用子模型微调大模型是现有的同时解决数据隐私、参数版权和客户端计算不足的最优途径
In recent years, various transformer-based Foundation Models (FMs) \cite{bommasani2021opportunities} such as BERT \cite{kenton2019bert}, GPT \cite{radford2018improving}, LLaMA \cite{touvron2023llama}, and ViT \cite{dosovitskiy2020image} have attained state-of-the-art performance across a diverse range of natural language processing (NLP) and computer vision (CV) tasks, yet also face both data privacy and FM copyright concerns. 
%待会改
For instance, a FM trained on medical data might inadvertently memorize sensitive patient information, and companies that own closed-source FMs may choose not to share FMs with the public. Federated Learning (FL) \cite{mcmahan2017communication} offers a privacy-preserving approach for collaborative fine-tuning of FMs among multiple participants. This approach is increasingly promising for FM fine-tuning applications, ensuring the adaptation of downstream tasks without directly sharing client data and server FM.\par
Recent methods \cite{xiao2023offsite,marchisio-etal-2023-mini} mainly aim to fine-tune FMs without using the full model, which leverage layer-drop techniques \cite{sajjad2023effect} to compress a FM and derive a sub-FM, enabling approximate fine-tuning of the original FM. However, these methods still pose \textbf{two significant challenges} that adversely reduce the performance of fine-tuned FMs.
% 先说结论，再举例子
\textbf{On one hand}, they failed to fine-tune FMs sufficiently as a result of discarding those intermediate layers of FMs, consequently leading to the performance degradation of fine-tuned FMs. 
As shown in Fig.\ref{fig1_a}, 
layer-drop methods fail to update intermediate layers of the FM during fine-tuning, due to the mismatch between the FM and the constructed sub-FM.
%layer-drop based methods construct sub-FM by leaving a significant number of intermediate layers of FM unchanged during fine-tuning due to the mismatches between the sub-FM and FM. 
%as shown in Fig.\ref{fig1_a}, the sub-FM construction method based on layer-drop will loses the correspondence of all intermediate layers between FM and sub-FM, and during fine-tuning, in order to synchronize the gradient update on sub-FM to FM through plug-in, we can only perform gradient update on those layers corresponding to both FM and sub-FM, which makes those intermediate layers of FM unable to be updated through training during the entire downstream task fine-tuning process.
%12句要一致，34句要一致
\textbf{On the other hand}, there is a potential defect for the accumulation of gradient errors of FMs due to the lack of alignment between sub-FMs and FMs during FL fine-tuning, subsequently leading to further performance degradation. Fig.\ref{fig1_b} shows that, due to the absence of alignment, existing methods might accumulate significant gradients update errors between the FM and its constructed sub-FM during the FL fine-tuning process.\par
%existing methods merely align sub-FM with FM using distill or further pre-training only before fine-tuning of sub-FM, and do not adjust sub-FM during fine-tuning to ensure the approximation of sub-FM with FM. As shown in Fig.\ref{fig1_b}, the update direction of fine-tuning without full model will not be in perfect conformity with the ideal update direction of FM (update direction of fine-tuning with full model), and we experimentally observe that the difference between sub-FM and FM accumulates and increases as the federation training process proceeds (as shown in Fig.4), which can further lead to deviations in the update direction.\par
%两个challenge的结论和影响
To address the above two challenges, we propose a framework called \underline {Fed}erated \underline {P}roxy \underline {F}ine-\underline {T}uning (FedPFT) to enhance the adaptation of FMs for downstream tasks, while neither server FMs nor client data are directly shared. 
% to address both FMs model copyright and client data privacy issues. 
%to further bridge the performance gap that exists between fine-tuning without full FM and with full FM. 
First, we design the sub-FM construction module, which performs layer compression on FMs to obtain sub-FMs by measuring neurons saliency of Feed-Forward Network (FFN) in transformer, 
%employing a layer-wise compression method measuring the neuron saliency, 
facilitating comprehensive fine-tuning of FMs by emphasizing those crucial neurons. 
Second, we design the sub-FM alignment module, which conducts a two-step distillations—layer-level and neuron-level—before and during FL fine-tuning respectively, ensuring the accurate alignment between sub-FMs and FMs with a theoretical guarantee. Extensive experiments on three FMs (i.e., BERT-base, RoBERTa-base, and ViT-base) and seven commonly used datasets (i.e., SST-2, QNLI, MNLI, QQP, CIFAR-10, CIFAR-100, Flowers) demonstrate that FedPFT outperforms existing baselines that fine-tune FMs without using the full model.
%outperforms existing fine-tuning without full model baseline.

%Experimental results on seven commonly used datasets (i.e., four text and three vision) demonstrate the superiority of FedPFT. 
\par %in order to make the update direction of FM fine-tuning without full model closer to the ideal update direction of FM.

%based on the theoretical analysis of how the differences between sub-FM and FM affect the gradient of FM fine-tuning without full model, we design a module that aligns sub-FM and FM by layer-level and neuron-level distill before and during federation training, respectively, in order to make the update direction of FM fine-tuning performed without full model closer to the ideal update direction of FM.\par 
Our contributions can be summarized as follows:
\begin{itemize}

\item %We propose FedPFT, which can realize federated fine-tuning of FM by constructing sub-FM for server-side FM to act as proxy for client-side local training, which can simultaneously satisfy the premise that client-side data and server-side FM are not directly shared, and alleviate the challenge of insufficient available computational resources on the client side to support FM fine-tuning;
We introduced FedPFT, a novel federated fine-tuning of FM method that establishes a sub-FM as a local proxy. FedPFT effectively improves fine-tuning performance while maintaining the critical constraint that neither the server FM nor the client data is directly shared.
%and alleviates the training overhead of the FL clients.

\item 
% We present a novel method for constructing sub-FM through layer-wise compression. 
%We designed a module to construct sub-FMs through layer-wise compression, which preserves the correspondence of all layers between sub-FMs and FMs in order to allow all layers of FMs can be sufficiently fine-tuned.
We propose the first module for constructing sub-FMs through layer-wise compression. This technique maintains layer correspondence across sub-FMs and FMs, ensuring the comprehensive fine-tuning of FM layers while also considering the alleviation of training overhead.
%This entails reducing the overall number of FM parameters by assessing neuron saliency and eliminating neurons in the FFN module of each transformer layer by a fixed ratio.
% It reduces the overall number of FM parameters by evaluating neuron saliency and eliminating neurons in FFN of each transformer layer by a fixed percentage.

\item %We propose a method to align FMs and sub-FMs through model distillation at two stages: before and during federation training, respectively. This alignment process involves distilling different parts of the sub-FM for training, employing distinct distillation losses at each of the two stages;
%We designed a module to align sub-FMs with FMs by a two-step distillations—layer-level and neuron-level—before and during FL fine-tuning respectively. We also reveal through a theoretical analysis why distillations are important for fine-tuning without full model.
We proposed the second module to align sub-FMs with FMs via a two-step distillation—layer-level and neuron-level—before and during FL fine-tuning respectively. Additionally, we offer theoretical insights into the significance of distillation for fine-tuning using sub-model.

\item We conducted extensive experiments on three FMs and seven commonly used datasets. Results demonstrate that FedPFT consistently outperforms existing baselines.
\end{itemize}
\section{Related Works}
\subsection{FM Fine-tuning through FL}
% FM面临的问题如何通过FL解决第二段
% PEFT第一段
Traditional centralized fine-tuning faces privacy concerns due to data sharing. Recent works \cite{chen2023federated,yu2023federated,zhuang2023foundation} introduce the concepts of Federated Foundation Models, to alleviate privacy concerns. \cite{fan2023fate,kuang2023federatedscope} propose various Fed-LLM platforms to support federated training of LLMs. \cite{xu2023federated} fine-tune FM via FL on mobile devices. \cite{chen2023fedbone} apply FM to federated multi-task learning. \cite{ijcai2023p394} save the communication cost during FL training through block-level parameters dropout. \cite{ijcai2023p483} reduce the communication and computation cost by training different layers of BERT in each round. \cite{zhang-etal-2023-fedpetuning} apply parameter-efficient fine-tuning (PEFT) methods to federated fine-tuning of FMs for privacy defense. 
However, most of aforementioned methods rely on sharing the server FM. This limitation may pose risks of FM copyright leakage and impose a substantial computational burden on clients.
\subsection{FM Fine-tuning without using the full model}
Early PEFT methods, including Lora \cite{hu2021lora}, Adapter-Tuning \cite{houlsby2019parameter}, and Prefix-Tuning \cite{li2021prefix}, focus on efficient fine-tuning of complete FMs by reducing the number of tunable parameters. Despite these efforts, the gradient computation for tunable parameters still needs backpropagation through the entire FM \cite{sung2022lst}.
Recently, Offsite-Tuning \cite{xiao2023offsite} is proposed to achieve fine-tuning without the full model. In this approach, the FM owner sends a light-weight adapter and an emulator constructed through layer-drop and knowledge distillation \cite{hinton2015distilling} to clients. Clients then fine-tune the adapter on their data with support from the emulator. The refined adapter is subsequently returned and incorporated into the full model for the fine-tuning process. Similarly, mini-model adaptation \cite{marchisio-etal-2023-mini} constructs a shallow mini-model from a fraction of the FM's parameters. However, those methods either discard significant amount of intermediate layers in FM or face the problem of gradient error accumulation, resulting in sub-optimal fine-tuning performance. Different from conventional methods, we construct sub-FMs based on layer-wise compression and mitigate gradient error accumulation by a two-step distillations.
\section{FedPFT}
\begin{figure*}
    \centering
    \includegraphics[scale=0.44]{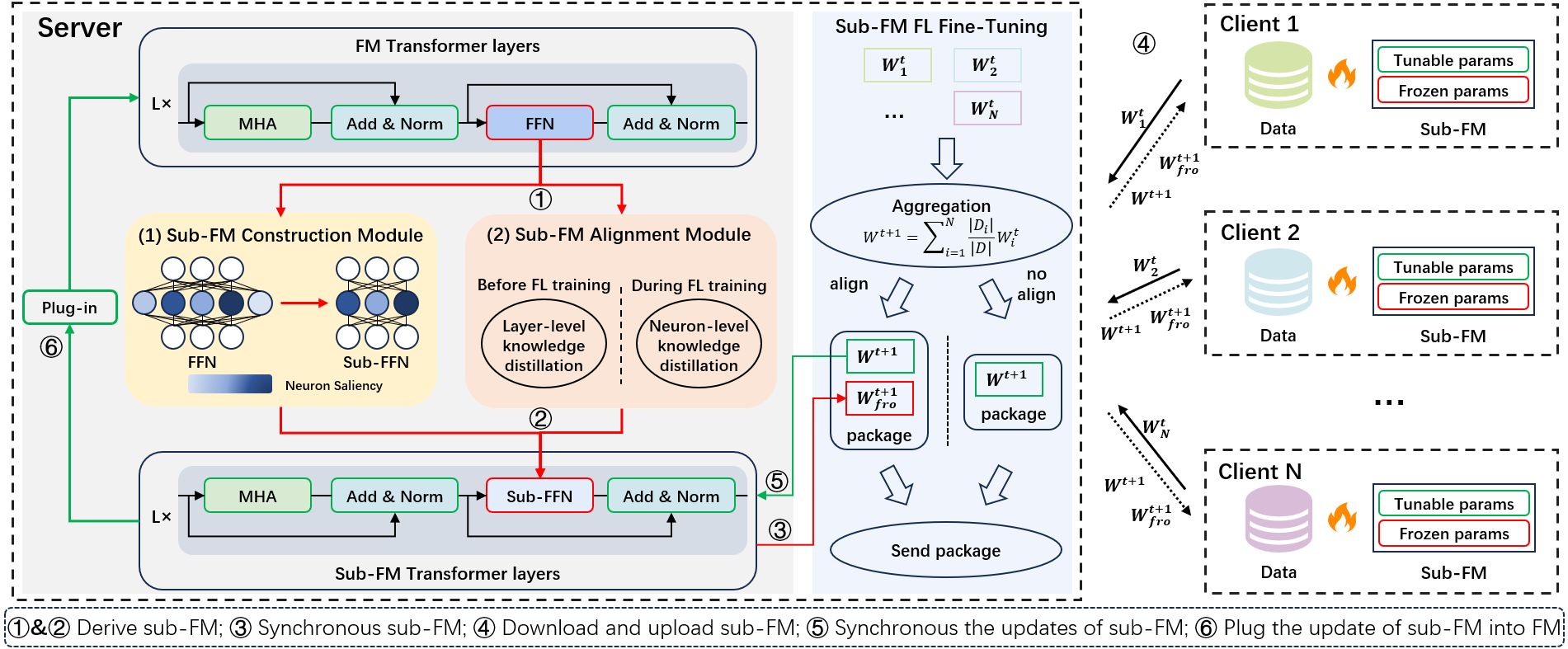}
    \caption{\textbf{The overall framework of FedPFT} that enhances FMs adaptation in downstream tasks through FL by two key modules: (1) Sub-FM Construction Module constructs sub-FM by layer-wise compression to facilitate comprehensive FM fine-tuning; and (2) Sub-FM Alignment Module aligns sub-FM by two-step distillation to ensure accurate alignment between sub-FM and FM with a theoretical guarantee.}
    \label{fig2}
\end{figure*}
\subsection{Preliminary}
\subsubsection{Federated Learning}
Given $N$ parties $P_{i}(i=1, ...., N)$, each party holds data $D_{i}$. Let $L(\cdot,\cdot)$ be the loss function. FL aims to train a machine learning model $\Theta$ using the dataset $D=\cup D_{i}(i=1, ..., N)$ under the coordination of a server $S$, while the raw data of all parties are not directly shared, which is formally described as
\begin{align}
    \Theta = \mathop{\arg\min}\limits_{\Theta}{\sum_{i=1}^{N} \frac{\lvert D_{i} \rvert}{\lvert D \rvert} L(\Theta, D_{i})}.
\end{align}
\subsubsection{Foundation Model Fine-tuning}
Given a foundation model $\Theta=\{W_{1}, W_{2}, ... , W_{n}\}$ and a downstream task dataset $D$, the fine-tuning aims to obtain a new model $\Theta^{*}=\{W_{1}^{*}, W_{2}^{*}, ... , W_{n}^{*}\}$, it is 
\begin{equation}
    \begin{aligned}
    \Theta^{*}&=\Theta+\Delta\Theta,\\
    \Delta\Theta&=\mathop{\arg\min}\limits_{\Delta\Theta}{L(\Theta+\Delta\Theta, D)}.
\end{aligned}
\end{equation}
\subsection{Problem Definition}
\label{problem_def}
\textbf{For FM fine-tuning using proxy sub-model}, we first construct a sub-model $\Theta^{'}=\{W_{1}, W_{2}, ..., W_{k}, W_{k+1}^{'}, ... , W_{n}^{'}\}$ with fewer parameters for $\Theta$ to act as a proxy. Second, fine-tune the proxy sub-model $\Theta^{'}$ using the dataset $D$. Finally synchronize the updated gradients on $\Theta^{'}$ to $\Theta$. Specifically, we construct $\Theta^{'}$ by compressing $\Theta$, and retain a portion of the parameter matrix in $\Theta$ during the compression process. This compression process is formally described as follows:
\begin{align}
    \Theta^{'} = \Theta_{1} \cup C(\Theta_{2}),
\end{align}
where  $\Theta_{1} \cup \Theta_{2} = \Theta$, and $C(\cdot)$ denotes the compression method. During the fine-tuning of $\Theta^{'}$, we update only $\Theta_{1}$ and synchronize the updated gradient on $\Theta_{1}$ into $\Theta$ after fine-tuning to obtain $\Theta^{*}$ 's approximation of $\Theta^{*'}$, which is formally described as 
\begin{equation}
    \begin{aligned}
    \Theta^{*'}&=(\Theta_{1} + \Delta\Theta_{1}^{'}) \cup \Theta_{2},\\
    \Delta\Theta_{1}^{'}&=\mathop{\arg\min}\limits_{\Delta\Theta_{1}^{'}}{L((\Theta_{1} + \Delta\Theta_{1}^{'}) \cup C(\Theta_{2}), D)}.
\end{aligned}
\end{equation}

\subsection{Method Overview}
% 和图2对齐
The overall framework of ours FedPFT is shown in Fig.\ref{fig2}.  We first derive a proxy sub-FM for the server FM, then collaboratively fine-tune the sub-FM through FL, and finally synchronise the updates on the sub-FM to the FM by plugging-in. FedPFT enhances downstream tasks adaptation of FMs through FL by two key module: (1) \textbf{Sub-FM Construction Module} that constructs sub-FMs by performing layer-wise compression on FMs based on neuron saliency; and (2) \textbf{Sub-FM Alignment Module} that reduces the difference between FMs and sub-FMs by layer-level and neuron-level knowledge distillation before and during FL fine-tuning, respectively. We will introduce those two modules in details as follows.
\subsection{Sub-FM Construction Module based on Layer-wise Compression}
\label{3.3}
% 压缩文献的介绍放在related work，介绍一下具体不同
% Many methods have been proposed for FM compression based on quantization\cite{}, pruning\cite{}, and knowledge distillation\cite{}, etc.
%but these methods only aim at obtaining a lightweight FM, not at obtaining a proxy sub-FM that helps FM to be trained indirectly, and thus cannot be applied directly to satisfy our needs. 
Transformer-based FM typically consist of three parts: an embedding layer, a task head, and a sequence of transformer layers. Since the size of FM is dominated by all transformer layers, we perform compression for each transformer layer.\par
Each transformer layer contains two sub-layers: Multi-Head Attention (MHA) and Feed-Forward Network (FFN), each of which applies residual connection and followed by layer normalization. The output of MHA is
\begin{equation}
    \begin{aligned}
    {\rm MHA}(x)&={\rm Concat}({\rm Attn}_{0}(x),..., {\rm Attn}_{h}(x))W^{O},\\
    {\rm Attn}(x)&={\rm softmax}(\frac{xW^{Q}({xW^{K}})^{T}}{\sqrt{d_{k}}})xW^{V},
\end{aligned}
\end{equation}
where $W^{Q} \in \mathbb{R}^{d_{model}\times d_{k}}$, $W^{K} \in \mathbb{R}^{d_{model}\times d_{k}}$, $W^{V} \in \mathbb{R}^{d_{model}\times d_{k}}$ and $W^{O} \in \mathbb{R}^{d_{model}\times d_{model}}$ are the weight matrices of query, key, value, and output in MHA, respectively. $h$ is the number of attention heads, $d_{k}$ and $d_{model}$ are the dimensions of key and FM, respectively, and $d_{model} = d_{k} \times h$. The parameters number of MHA is about $4d_{model}^{2}$.\par
The output of FFN is
\begin{align}
    {\rm FFN}(x)&={\rm gelu}(xW_{1}+b_{1})W_{2}+b_{2},
    \label{ffn_output}
\end{align}
where $W_{1} \in \mathbb{R}^{d_{model}\times d_{ff}}$ and $W_{2} \in \mathbb{R}^{d_{ff}\times d_{model}}$ are  the weight matrices of two linear layers in FFN, respectively, $b_{1} \in \mathbb{R}^{d_{ff}}$ and $b_{2} \in \mathbb{R}^{d_{model}}$ are the bias, $d_{ff}$ is the dimensions of FFN and is usually set to $4 \times d_{model}$. The parameters number of FFA is about $8d_{model}^{2}$. Obviously, it is that most of the parameters in transformer layer are contained in FFN.\par

Hence, we opt to compress the FFN rather than the MHA of each layer for sub-FM construction. This minimizes the parameters number of sub-FM while ensuring a consistent set of trainable parameters (i.e. MHA) between the FM and its sub-FM at each layer. We accomplish layer-wise compression by systematically removing neurons with low saliency in the FFN of each layer, employing a fixed ratio.

First, by further transforming (\ref{ffn_output}), we can represent the output of FFN as the sum of $d_{ff}$ neurons outputs:
\begin{align}
{\rm FFN}(x)=\sum_{i=1}^{d_{ff}}{({\rm gelu}(xu_{i}+b_{1 i})w_{i}})+b_{2},
\label{eq9}
\end{align}
where $w_{i} \in \mathbb{R}^{d_{model}}$ is the $i$th column vector in $W_{2}$, $u_{i} \in \mathbb{R}^{d_{model}}$  is the $i$th row vector in $W_{1}$, $b_{1i}$ is the $i$th item in $b_{1}$. \par

Second, based on (\ref{eq9}) and magnitude-based pruning method \cite{wen2016learning}, we use the L2-norm of all connect weights of neuron to measure its saliency, that is:
\begin{align}
    {\rm Saliency}(i)=\sqrt{\sum_{j=1}^{d_{model}}(w_{ij}^{2}+u_{ij}^{2})},
\end{align}
where $i$ is the index of neurons in FFN.\par

Finally, we construct a sub-FM serving as a proxy for the FM, accomplished by systematically eliminating neurons with low saliency in each layer at a fixed ratio.

\subsection{Sub-FM Alignment Module based on Two-step Knowledge Distillation}
In accordance with the description of FM fine-tuning using proxy sub-model in \ref{problem_def}, it is evident that the FM fine-tuning is entirely contingent on the gradient descent of its sub-FM. This fine-tuning methodology prompts a fundamental question: How can we ensure the convergence of FM to the optimal value with the assistance of its sub-FM?
\begin{theorem}
\label{theorem_1}
    Suppose both the function $f: \mathbb{R}^{n}\rightarrow \mathbb{R}$ and its approximation $f': \mathbb{R}^{n}\rightarrow \mathbb{R}$ are convex and differentiable, and their gradient are Lipschitz continuous with constant $L_{1} \textgreater 0$ and $L_{2} \textgreater 0$, respectively, i.e. we have that$ \|\triangledown f(x)-\triangledown f(y)\|_{2} \leq L_{1}\|x-y\|_{2}$ and $ \|\triangledown f'(x)-\triangledown f'(y)\|_{2} \leq L_{2}\|x-y\|_{2}$ for any $x,y$. Then if we run gradient descent for $k$ iterations on $f'$ with a fixed step size $\eta\leq\frac{1}{L_{1}}$ and synchronize the gradient to $f$, let $\triangledown f' - \triangledown f=\delta$, when satisfying 
    \begin{align}
        \|\delta\|_{2}^{2}& \textless \frac{1}{2}\|\triangledown f\|_{2}^{2},
        \label{eq11}
    \end{align}
    \begin{align}
        \eta\sum_{i=1}^{k}\|\delta^{(i)}\|_{2}^{2}&\leq \sum_{i=1}^{k}\langle \delta^{(i)}, x^{(i)}-x^{*} \rangle,
        \label{eq12}
    \end{align}
    it will yield a solution $f^{(k)}$ which satisfies
    \begin{align}
        f(x^{(k)})-f(x^{*}) \leq \frac{\|x^{(0)}-x^{*}\|_{2}^{2}}{2\eta k},
    \end{align}
    where $f(x^{*})$ is the optimal value. 
\end{theorem}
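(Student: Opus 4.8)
The plan is to adapt the classical $O(1/k)$ analysis of gradient descent for $L$-smooth convex functions to this ``proxy-gradient'' setting, where the iterates $x^{(i+1)} = x^{(i)} - \eta\nabla f'(x^{(i)})$ are driven by $\nabla f'$ but the guarantee is asserted for $f$. The key idea is to apply the descent lemma to $f$ itself along this trajectory, which is precisely why the admissible step size is tied to $L_{1}$, the Lipschitz constant of $\nabla f$; the regularity of $f'$ and the constant $L_{2}$ only serve to make $\nabla f'$ well-defined and never enter the final bound. Conditions \eqref{eq11} and \eqref{eq12} will be used for two separate purposes: \eqref{eq11} to force $f$ to decrease monotonically along the iterates, and \eqref{eq12} to force the accumulated perturbation error to be non-positive.

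First I would derive a one-step descent inequality for $f$. By $L_{1}$-smoothness, $f(x^{(i+1)}) \le f(x^{(i)}) + \langle \nabla f(x^{(i)}), x^{(i+1)}-x^{(i)}\rangle + \tfrac{L_{1}}{2}\|x^{(i+1)}-x^{(i)}\|_{2}^{2}$; substituting $x^{(i+1)}-x^{(i)} = -\eta\nabla f'(x^{(i)})$, splitting $\nabla f' = \nabla f + \delta$, and using $\eta \le 1/L_{1}$ to bound $\tfrac{L_{1}\eta^{2}}{2}$ by $\tfrac{\eta}{2}$, the $\langle\nabla f,\delta\rangle$ cross terms cancel and one is left with
\[
  f(x^{(i+1)}) \;\le\; f(x^{(i)}) - \tfrac{\eta}{2}\|\nabla f(x^{(i)})\|_{2}^{2} + \tfrac{\eta}{2}\|\delta^{(i)}\|_{2}^{2}.
\]
Invoking \eqref{eq11} at each step then gives $f(x^{(i+1)}) \le f(x^{(i)}) - \tfrac{\eta}{4}\|\nabla f(x^{(i)})\|_{2}^{2} \le f(x^{(i)})$, so the sequence $f(x^{(i)})$ is non-increasing.

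Next I would convert this into a convergence rate. Combining the displayed inequality with convexity of $f$, namely $f(x^{(i)})-f(x^{*}) \le \langle\nabla f(x^{(i)}), x^{(i)}-x^{*}\rangle$, and with the square-completion identity obtained by expanding $\|x^{(i+1)}-x^{*}\|_{2}^{2}$ along the actual $f'$-update, the $\tfrac{\eta}{2}\|\delta^{(i)}\|_{2}^{2}$ contributions cancel and one obtains a per-step estimate of the form
\[
  f(x^{(i+1)}) - f(x^{*}) \;\le\; \tfrac{1}{2\eta}\bigl(\|x^{(i)}-x^{*}\|_{2}^{2} - \|x^{(i+1)}-x^{*}\|_{2}^{2}\bigr) - \langle\delta^{(i)}, x^{(i)}-x^{*}\rangle,
\]
up to an index shift in the last term. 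Summing over the $k$ iterations, the Euclidean terms telescope to at most $\tfrac{1}{2\eta}\|x^{(0)}-x^{*}\|_{2}^{2}$, while \eqref{eq12} yields $\sum_{i}\langle\delta^{(i)}, x^{(i)}-x^{*}\rangle \ge \eta\sum_{i}\|\delta^{(i)}\|_{2}^{2} \ge 0$, so the perturbation contribution can only help. Hence $\sum_{i=1}^{k}\bigl(f(x^{(i)})-f(x^{*})\bigr) \le \tfrac{1}{2\eta}\|x^{(0)}-x^{*}\|_{2}^{2}$. Finally, the monotonicity established above gives $f(x^{(k)}) \le \tfrac{1}{k}\sum_{i=1}^{k} f(x^{(i)})$, and dividing by $k$ delivers $f(x^{(k)}) - f(x^{*}) \le \tfrac{\|x^{(0)}-x^{*}\|_{2}^{2}}{2\eta k}$, as claimed.

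I expect the main obstacle to be the error bookkeeping in the rate step: one must carry the $\delta^{(i)}$ terms through the descent lemma, the convexity inequality, and the square completion so that, after summation, they collapse to exactly the quantities $\|\delta^{(i)}\|_{2}^{2}$ and $\langle\delta^{(i)}, x^{(i)}-x^{*}\rangle$ appearing in \eqref{eq11}--\eqref{eq12} (up to the harmless shift between $x^{(i)}$ and $x^{(i+1)}$), so that no extra hypotheses on $\delta$ are required. A secondary subtlety is keeping straight that smoothness and convexity are exploited for $f$, not for the proxy $f'$ on which gradient descent is actually run, which is what makes $L_{1}$, rather than $L_{2}$, the relevant constant.
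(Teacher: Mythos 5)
Your overall route coincides with the paper's Appendix~A proof: apply the descent lemma for $f$ (with constant $L_{1}$, which is why $\eta\le 1/L_{1}$ and why $L_{2}$ never appears) along the iterates $x^{(i+1)}=x^{(i)}-\eta\nabla f'(x^{(i)})$, use \eqref{eq11} for monotone decrease, then combine convexity, a completion of the square along the actual update, telescoping, and \eqref{eq12}. Your one-step inequality $f(x^{(i+1)})\le f(x^{(i)})-\tfrac{\eta}{2}\|\nabla f(x^{(i)})\|_{2}^{2}+\tfrac{\eta}{2}\|\delta^{(i)}\|_{2}^{2}$ is correct, and the monotonicity deduction from \eqref{eq11} as stated in the theorem is fine (in fact it matches the theorem's condition more faithfully than the appendix, which needs $\|\delta\|_{2}^{2}<\tfrac{1}{2}\|\nabla f'\|_{2}^{2}$).

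The gap is in the rate step. The square completion consumes the combination $\langle\nabla f'(x^{(i)}),x^{(i)}-x^{*}\rangle-\tfrac{\eta}{2}\|\nabla f'(x^{(i)})\|_{2}^{2}=\tfrac{1}{2\eta}\bigl(\|x^{(i)}-x^{*}\|_{2}^{2}-\|x^{(i+1)}-x^{*}\|_{2}^{2}\bigr)$, i.e.\ it wants $\|\nabla f'\|_{2}^{2}$, whereas your one-step inequality supplies $-\tfrac{\eta}{2}\|\nabla f\|_{2}^{2}=-\tfrac{\eta}{2}\|\nabla f'\|_{2}^{2}+\eta\langle\nabla f',\delta\rangle-\tfrac{\eta}{2}\|\delta\|_{2}^{2}$. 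Carrying this through, the per-step estimate is actually
\begin{equation*}
f(x^{(i+1)})-f(x^{*})\le\tfrac{1}{2\eta}\bigl(\|x^{(i)}-x^{*}\|_{2}^{2}-\|x^{(i+1)}-x^{*}\|_{2}^{2}\bigr)-\langle\delta^{(i)},x^{(i)}-x^{*}\rangle+\eta\langle\nabla f(x^{(i)}),\delta^{(i)}\rangle+\eta\|\delta^{(i)}\|_{2}^{2},
\end{equation*}
not the one you state. The $\eta\|\delta^{(i)}\|_{2}^{2}$ term does not cancel before summation — it is exactly what \eqref{eq12} exists to cancel against $\langle\delta^{(i)},x^{(i)}-x^{*}\rangle$; that in your accounting \eqref{eq12} is only needed to make the inner-product sum nonnegative is a symptom of the bookkeeping being off. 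More seriously, the cross term $\eta\langle\nabla f(x^{(i)}),\delta^{(i)}\rangle$ survives, can be positive, and is controlled by neither \eqref{eq11} (Cauchy--Schwarz only gives roughly $\tfrac{\eta}{\sqrt{2}}\|\nabla f\|_{2}^{2}$, and no negative slack in $\|\nabla f\|_{2}^{2}$ remains after the square completion) nor \eqref{eq12}. The paper avoids an explicit leftover cross term by first rewriting its one-step bound as $f(x^{+})\le f(x)-\tfrac{\eta}{2}(\|\nabla f'(x)\|_{2}^{2}-2\|\delta\|_{2}^{2})$, keeping $\|\nabla f'\|_{2}^{2}$ where the square completion needs it and leaving only the $\eta\|\delta\|_{2}^{2}-\langle\delta,x-x^{*}\rangle$ residual; note, however, that the step it uses to get there ($\|\nabla f+\delta\|_{2}^{2}\le\|\nabla f\|_{2}^{2}+\|\delta\|_{2}^{2}$) implicitly assumes $\langle\nabla f,\delta\rangle\le 0$, so the cross term you gloss over is precisely the delicate point of the whole argument and needs an explicit treatment, not a claimed cancellation.
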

\begin{proof}
    See Appendix.A
\end{proof}
Intuitively, Theorem.\ref{theorem_1} indicates that when (\ref{eq11}) and (\ref{eq12}) are satisfied, gradient descent of FM with the help of sub-FM is guaranteed to converge and converges with rate $O(\frac{1}{k})$. It is evident that both conditions (\ref{eq11}) and (\ref{eq12}) are constraints on the difference between the actual and ideal update gradients of FM, and thus how to minimize the difference of the update gradients becomes a problem to be solved in the next step.
\begin{theorem}
    \label{theorem_2}
    For a transformer, let the number of attention head be 1, and ignoring its nonlinear function and residual connection, its output can be expressed as $y=xW^{Q}(xW_{K})^{T}xW^{V}W^{O}W_{1}W_{2}$, let $A=W^{Q}(W^{K})^{T}$, $B=W^{V}W^{O}$, $C=W_{1}W_{2}$, then $y=xAx^{T}xBC$, and the output of its corresponding sub-layer after compressing FFN layer is expressed as $y'=xAx^{T}xBC'$, assuming that the gradient of loss function $loss=f(y)$ is Lipschitz continuous with constant $L_{3} \textgreater 0$ and $\|C-C'\|_{2}^{2} \leq \epsilon_{1}$, $\|y-y'\|_{2}^{2} \leq \epsilon_{2}$, there exists the constant $K_{1} \textgreater 0$ and $K_{2}\textgreater 0$ such that
    \begin{equation}
        \begin{aligned}
        \|\frac{\partial loss'}{\partial A}-\frac{\partial loss}{\partial A}\|_{2}^{2} &\leq K_{1}\epsilon_{1} + K_{2}\epsilon_{2}, \\
        \|\frac{\partial loss'}{\partial B}-\frac{\partial loss}{\partial B}\|_{2}^{2} &\leq K_{1}\epsilon_{1} + K_{2}\epsilon_{2},
    \end{aligned}
    \end{equation}
\end{theorem}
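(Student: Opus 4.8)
The plan is to bound the two gradient-difference norms by expanding the backpropagation formulas for $\partial loss/\partial A$ and $\partial loss/\partial B$ in terms of the forward quantities, and then isolating the sources of discrepancy between the compressed and uncompressed models. Since $y = xAx^{T}xBC$ and $y' = xAx^{T}xBC'$ share the same $A$, $B$, and $x$, the only structural difference is $C$ versus $C'$; the perturbation propagates both through the explicit appearance of $C$ in the Jacobians and through the perturbed output $y'$ feeding into $\partial loss/\partial y'$. First I would write $\frac{\partial loss}{\partial A} = g(y)\,J_A(C)$ and $\frac{\partial loss}{\partial A}' = g(y')\,J_A(C')$, where $g(\cdot) = \partial loss/\partial y$ evaluated at the respective outputs and $J_A(C)$ is the (matrix-valued, linear in $C$) Jacobian of $y$ with respect to $A$; similarly for $B$. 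Then I would add and subtract the mixed term $g(y')\,J_A(C)$ (or $g(y)\,J_A(C')$) so that
\begin{align}
\Bigl\|\tfrac{\partial loss'}{\partial A} - \tfrac{\partial loss}{\partial A}\Bigr\|_2
&\leq \bigl\|g(y') - g(y)\bigr\|_2\,\|J_A(C)\|_2
 + \bigl\|g(y')\bigr\|_2\,\bigl\|J_A(C) - J_A(C')\bigr\|_2 .
\end{align}

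Next I would control each of the two terms. For the first term, $\|g(y') - g(y)\|_2 \leq L_3\|y - y'\|_2 \leq L_3\sqrt{\epsilon_2}$ by Lipschitz continuity of the loss gradient, and $\|J_A(C)\|_2$ is bounded by a constant depending on $\|x\|_2$, $\|B\|_2$, $\|C\|_2$ (all fixed model quantities); this contributes a term of order $\sqrt{\epsilon_2}$. For the second term, $J_A(C) - J_A(C')$ is linear in $C - C'$, so $\|J_A(C) - J_A(C')\|_2 \leq c\,\|C - C'\|_2 \leq c\sqrt{\epsilon_1}$ with $c$ depending on $\|x\|_2$ and $\|B\|_2$, while $\|g(y')\|_2$ is bounded (either assume $loss$ has bounded gradient, or bound $\|g(y')\|_2 \leq \|g(y)\|_2 + L_3\sqrt{\epsilon_2}$ and absorb into constants); this contributes a term of order $\sqrt{\epsilon_1}$. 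Squaring the sum and using $(a+b)^2 \leq 2a^2 + 2b^2$ then gives a bound of the form $K_1\epsilon_1 + K_2\epsilon_2$ for suitable constants $K_1, K_2 > 0$. The argument for $\partial loss/\partial B$ is identical in structure: the Jacobian $J_B$ with respect to $B$ is again linear in $C$ (with the factor $xAx^{T}x$ playing the role of the input), so the same two-term split and the same bounds apply, yielding the same $K_1, K_2$ (or, more honestly, constants of the same order that one may take as a common upper bound).

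The main obstacle I anticipate is bookkeeping the matrix/tensor dimensions carefully: $y$ is a matrix (sequence length $\times$ $d_{model}$), $A$, $B$, $C$ are square matrices, and the "gradient" $\partial loss/\partial A$ is a matrix obtained by contracting the loss gradient $g(y)$ against a fourth-order Jacobian tensor; making the chain rule precise and verifying that each intermediate factor has an operator-norm (or Frobenius-norm) bound in terms of fixed model parameters requires care, especially because $y'$ is not fixed but is itself controlled only up to $\epsilon_2$. A secondary subtlety is justifying that $\|g(y')\|_2$ and the various norms of intermediate activations are genuinely bounded — this is where one either invokes an implicit boundedness assumption on the relevant domain or threads the $\sqrt{\epsilon_2}$ bound through. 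Once the norms are organized, the actual inequalities are routine triangle-inequality and Lipschitz estimates, so I would keep the write-up focused on the decomposition and defer the constant-chasing to the appendix.
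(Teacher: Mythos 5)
Your proposal is correct and follows essentially the same route as the paper's Appendix B proof: write the gradients via the chain rule, add and subtract the mixed term ($g(y')$ paired with the uncompressed $C$), control the first piece by Lipschitz continuity of the loss gradient (giving the $\epsilon_2$ term) and the second by linearity in $C-C'$ (giving the $\epsilon_1$ term), with all remaining factors absorbed into constants depending on $\|x\|_2$, $\|B\|_2$, $\|C\|_2$, and $\|\partial loss'/\partial y'\|_2$. Your version is in fact slightly more careful about constants (the explicit $(a+b)^2\leq 2a^2+2b^2$ step and the boundedness of $\|g(y')\|_2$, which the paper leaves implicit), but the decomposition and key estimates are the same.
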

\begin{proof}
    See Appendix.B
\end{proof} 
From Theorem.\ref{theorem_2}, it is evident that shrinking the error of gradients can be achieved by narrowing the difference in output and weights between the sub-FM and FM.\par
Based on the above analysis, we grasp the importance of narrowing the difference between sub-FM and FM via knowledge distillation to boost the performance of FM that fine-tuned using sub-FM. Therefore, we propose a method to align sub-FM using layer-level and neuron-level distillations in two phases, before and during FL fine-tuning, respectively. These two distillation methods are shown in the Fig.\ref{fig3}.
\begin{figure}
    \centering
    \includegraphics[scale=0.32]{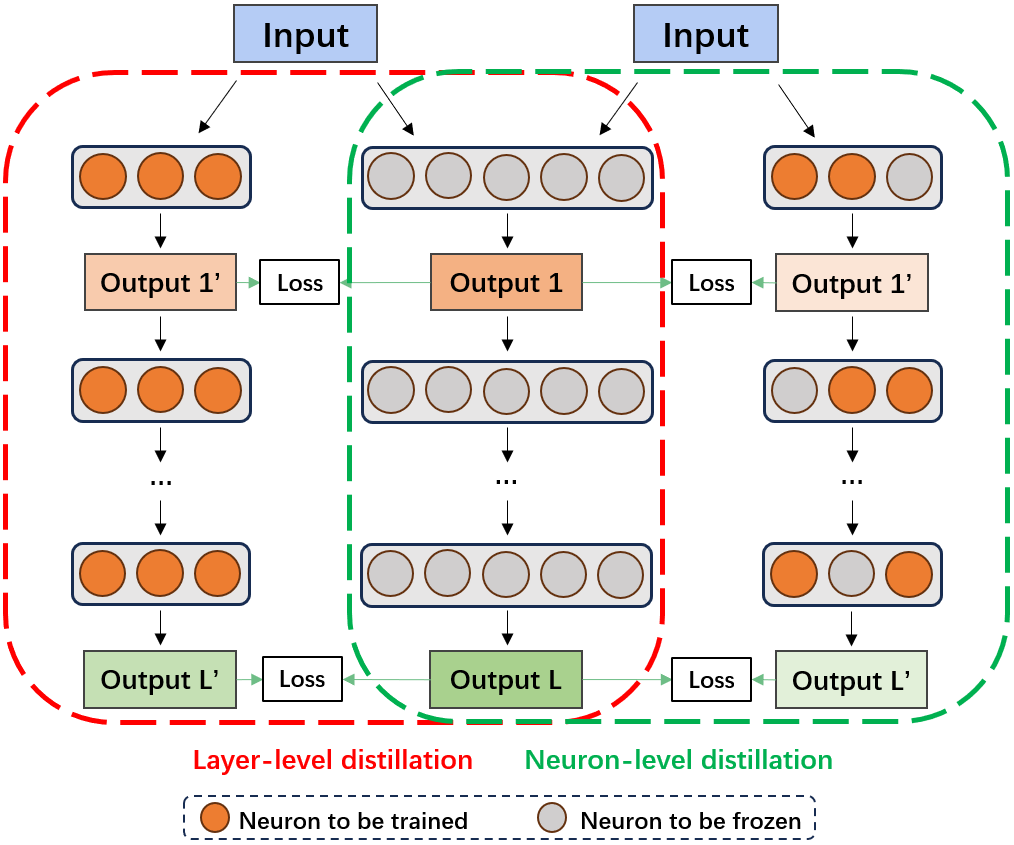}
    \caption{An example of two distillation processes}
    \label{fig3}
\end{figure}
\subsubsection{Layer-level distillation before FL fine-tuning}
% 断句 因为子FM是逐层压缩得到的
Given that our sub-FMs are constructed based layer-wise compression, where each layer retains a set of tunable parameters (i.e., MHA), we leverage the outputs from all layers to compute the layer-level distillation loss.\par 
% To compute the distillation loss, we add a regularization term 
% based on Theorem.\ref{theorem_2}, to measuri the difference between the weights of FFN and sub-FFN in each layer to the existing distillation loss: 
Furthermore, based on Theorem.\ref{theorem_2}, we enhance the aforementioned distillation loss by introducing a regularization term. The purpose of this regularization term is to quantify the disparity between the weights of FFN and sub-FFN in each layer, to further facilitate a thorough knowledge transfer during fine-tuning by refining the alignment process. Thus, the final distillation loss is denoted as:
\begin{align}
\notag
    L_{KD}=&\frac{1}{LM_{KD}}\sum_{i=1}^{L}((\sum_{j=1}^{M_{KD}}\|O_{j}^{(i)}-O_{j}^{'(i)}\|_{2}^{2})\\
    &+\mu\|W_{1}^{(i)}W_{2}^{(i)}-W_{1}^{'(i)}W_{2}^{'(i)}\|_{2}^{2}),
\end{align}
where $L$ is the number of layers, $M_{KD}$ is the size of distill dataset, $O_{j}^{(i)}$ is the output of the $j$th sample in the $i$th layer, $W_{1}^{(i)}$ and $W_{2}^{(i)}$ are the two weight matrices of the $i$th FFN, $\mu$ is the regularization coefficient.
\subsubsection{Neuron-level distillation during FL fine-tuning}
% \begin{figure}
%     \centering
%     \includegraphics[scale=0.4]{fig4.png}
%     \caption{Curve of KD\_loss with communication rounds on RoBERTa}
%     \label{fig3}
% \end{figure}
In addition, the absence of alignment between FM and its constructed sub-FM during FL fine-tuning may cause the actual gradient update direction of the FM to gradually deviate from its ideal direction. This deviation can substantially reduce the performance of the fine-tuned FM. To mitigate the problem, we re-align the sub-FM with the FM after FL aggregation in certain rounds.\par
However, since the datasets for distillation and the datasets for local fine-tuning are typically collected from different domains, excessively aligning sub-FMs through distillation may hinder the adaptation of sub-FMs to downstream tasks. Inspired by \cite{mallya2018packnet}, during the alignment process in FL fine-tuning, we opt to update only a subset of neurons with low saliency in local fine-tuning to prevent the risk of sub-FM forgetting knowledge of local data. \par
Moreover, since all FFNs of sub-FM are not updated during FL fine-tuning, the effectiveness of magnitude-based neuron saliency measurement methods diminishes. To address this, we opt to select neurons for updating during alignment based on the Average Percentage of Zero activations (APoZ) of outputs on the downstream task dataset \cite{hu2016network}. The $APoZ_{k}^{(i)}$ of the $k$th neuron in $i$th layer is defined as:
\begin{align}
    {APoZ}_{k}^{(i)}=\frac{1}{M_{DT}S}\sum_{j=1}^{M_{DT}}\sum_{l=1}^{S}\mathbb{I}(O_{jkl}^{(i)}=0),
\end{align}
where $M_{DT}$ is the size of the downstream task dataset, $S$ is the sequence length of the $j$th sample, $O_{jkl}^{(i)}$ is the output of the $l$th token of the $j$th sample at the $k$th neuron in $i$th layer, $\mathbb{I}(\cdot)$ is the indicator function.\par
We calculate the APoZ for each neuron on the client using the local dataset before each round that requires alignment, and subsequently select the neurons that need to be updated during alignment based on their APoZ values.
\begin{table*}
    \centering
    \begin{tabular}{cccccccc}
        \hline
         Model& Setting & Method & SST-2 & QNLI & MNLI-(m/mm) & QQP & \makecell[c]{Transformers\\Params}\\
        \hline
        \multirow{3}*{BERT}& \makecell[c]{Fine-tuning \\with full model} &FedPETuning & 91.6 & 88.4 & 80.7/81.6 & 87.8 & 81M\\
         \cline{2-8}
         &\multirow{2}*{\makecell[c]{Fine-tuning \\without full model}}& FedOT  &90.4 &83.9 &74.9/74.9 &81.6 & 47M \\
         
         && FedPFT& \textbf{91.6} & \textbf{87.5} & \textbf{78.6/79.0} & \textbf{86.3} & 47M\\
        \hline
        \multirow{3}*{RoBERTa}& \makecell[c]{Fine-tuning \\with full model}& FedPETuning&  93.6 & 90.8 & 86.1/85.5 & 88.3 &  81M\\
        \cline{2-8}
         &\multirow{2}*{\makecell[c]{Fine-tuning \\without full model}}& FedOT &  92.8& 85.3 &80.6/81.4 &84.6 & 47M  \\
         && FedPFT & \textbf{93.1} & \textbf{88.7} & \textbf{83.4/83.2} & \textbf{87.1} & 47M\\ 
         
        \hline
    \end{tabular}
    \caption{\textbf{Overall experimental results on BERT and RoBERTa.} The evaluation metric is accuracy. For MNLI, 'm' and 'mm' denote the matched and mismatched results, respectively. The best result for the same setting is marked in bold.}
    \label{tab1}
\end{table*}
\subsection{Cost Analysis}
We perform a theoretical analysis of the computational and communication cost of FedPFT based on BERT, and other models such as RoBERTa and ViT are similar. Following the settings in \cite{ijcai2023p483}, we assume that all FL clients have the same training settings and exclude external differences such as local dataset size and hardware environment.
\subsubsection{Computational Cost}
Given a BERT model, let $V$ be the vocabulary size, $S$ be the sequence length, $L$ be the number of layers, and $c_{f}$,$c_{b}$ be the number of forward propagation and backward propagation respectively. Based on the analysis in \ref{3.3}, the computational cost of a BERT model is 
$O(d_{model}(V+S) + L(4Sd_{model}^{2}+S^{2}d_{model})
   +2LSd_{model}d_{ff}+LSd_{model})$
where the four terms denote the cost of embedding, MHA, FFN and Add\&Norm, respectively. \par
Based on the above information, the overall time complexity of the full model is computed as follows. First, the time complexity of embedding is $O(d_{model}(V+S))$. Second, due to $d_{ff}=4 d_{model}$, the forward propagation time complexity is about $ O(c_{f}L(12Sd_{model}^{2}+S^{2}d_{model}))$. Identically, the backward propagation time complexity is $O(c_{b}L(12Sd_{model}^{2}+S^{2}d_{model}))$. Finally, the overall time cost is $O(d_{model}(V+S)+L(c_{f}+c_{b})(12Sd_{model}^{2}+S^{2}d_{model}))$ and we have $d_{model}(V+S) \ll L(c_{f}+c_{b})(12Sd_{model}^{2}+S^{2}d_{model})$ and $S \textless d_{model}$.\par
In FedPFT, because $d'_{ff}=d_{model}$ for sub-FM, the time complexity during local training is $O(L(c_{f}+c_{b})(6Sd_{model}^{2}+S^{2}d_{model}))$. Thus, compared with full model, FedPFT could reduce almost half the computational cost of all clients.
\subsubsection{Communication Cost}
Following \cite{vaswani2017attention}, the space complexity of a BERT model is $O(d_{model}(V+S)+12Ld_{model}^{2}+2Ld_{model})$. \par
In FedPFT, if PEFT methods is not used, all model parameters need to be transmitted in each iteration, thus the communication cost will be in the complexity of $O(d_{model}(V+S)+6Ld_{model}^{2}+2Ld_{model})$, again shrinking nearly half of the cost. If PEFT method is used, take Lora as an example, let $t$ be the interval between two alignments during FL fine-tuning, $q$ be the proportion of neurons that need to be updated during alignment, $r$ be the rank of Lora, and Lora is only added to MHA, then the communication cost will be in the complexity of $O(8Lrd_{model}+\frac{2}{t}qLd_{model}^{2})$. Since $\frac{1}{t}qd_{model}$ is usually smaller than $r$, the complexity is about $O(Lrd_{model})$, which does not impose a significant computational cost.

\section{Experiments}

\subsection{Experimental Setting}
\subsubsection{Models and Datasets}
Our evaluations span a variety of FMs, including two NLP FMs (i.e., BERT-base \cite{kenton2019bert}, RoBERTa-base \cite{liu2019roberta}) and one CV FM (i.e., ViT-base \cite{dosovitskiy2020image}). Three FMs share consistent hyper-parameters for the number of layers $L$, attention heads $h$, hidden dimension $d_{model}$, and FFN dimension $d_{ff}$, all set at $L=12$, $h=12$, $d_{model}=768$, and $d_{ff}=3072$. Our NLP FM evaluations encompass four text datasets: SST-2 \cite{socher2013recursive}, QNLI \cite{socher2013recursive}, MNLI \cite{williams2017broad}, and QQP\footnote{https://quoradata.quora.com/First-Quora-Dataset-Release-Question-Pairs}. The CV FM is evaluated on three image datasets: CIFAR-10 \cite{krizhevsky2009learning}, CIFAR-100 \cite{krizhevsky2009learning} and Flowers \cite{nilsback2008automated}. Specifically, SST-2 is the text sentiment analysis task, QNLI and MNLI are the natural language inference tasks, QQP is the text matching task, while CIFAR-10, CIFAR-100 and Flowers focus on image recognition tasks. In addition, we employ the Bookcorpus \cite{zhu2015aligning} and Wikipedia datasets for distillation of NLP sub-FMs, and the ImageNet-1k \cite{russakovsky2015imagenet} for the distillation of CV sub-FM, respectively. Detailed dataset descriptions can be found in Appendix.C.
\subsubsection{Baselines}
We compare FedPFT with two methods, including: (1) FedPETuning \cite{zhang-etal-2023-fedpetuning} that performs parameter-efficient fine-tuning (PEFT) with full model through FL; and (2) FedOT that performs PEFT without full model through FL. It is worth noting that FedOT is the FL implementation of Offsite-Tuning \cite{xiao2023offsite} with multiple clients.
%\subsubsection{}
% For client local fine-tuning, the fine-tuning method is Lora\cite{}, the fine-tuning epoch is 1, and the optimiser used for fine-tuning is AdamW. For NLP tasks, we grid search for optimal hyper-parameters in batch size$\in \{8, 16, 32\}$ and learning rate$\in \{4e-4,8e-4,1e-3\}$. For CV tasks, we grid search for optimal parameters in batch size$\in \{64, 128, 256\}$ and learning rate$\in \{3e-3,8e-3,1e-2\}$. The number of distillation epochs for alignment before FL training for NLP sub-FMs and CV sub-FM is 5, 50, respectively. For FedPFT, alignment is performed every 10 rounds during FL training, with the number of alignment epochs for NLP and CV being 0.02 and 0.2, respectively, and the proportion of neurons that need to be updated during alignment being 0.5.
\subsubsection{Hyper-parameters and Implementation}
In the FL scenario, we set up 100 clients with 500 total communication rounds and employ the Dirichlet data partition method \cite{hsu2019measuring} to construct different label-skew data heterogeneity scenarios. In each communication round, we randomly select 10 clients for local fine-tuning, using a linear decay of the global learning rate over rounds and AdamW as the local fine-tuning optimizer. We use FedAvg \cite{mcmahan2017communication} for global model aggregation. For FedOT, following \cite{xiao2023offsite}, we use 2 layers at the bottom and 2 layers at the top as $Adapter$, and compress the intermediate 8 layers into 3 layers as $Emulator$. For FedPFT, we construct sub-FMs by performing layer-wise compression on the intermediate 10 layers. For fair comparison, we keep the number of trainable parameters the same for all three methods. The evaluation metric is the accuracy on the given validation set. All pre-trained models used in experiments are obtained from Hugging Face\footnote{https://huggingface.co/}. The FL scenarios were implemented using FLGo \cite{wang2023flgo}, and all experiments are conducted in PyTorch 2.1 and NVIDIA 3090 GPUs. Other detailed hyper-parameters can be seen in Appendix.C. 
\subsection{Overall Comparisons}
\begin{table}
    \centering
    \begin{tabular}{cccc}
    \hline
         Model& Method&CIFAR-10 & \makecell[c]{Transformers\\Params}\\
    \hline
         \multirow{3}*{ViT}&FedPETuning & 98.2& 81M\\
         \cline{2-4}
         & FedOT&95.5 & 47M\\
         & FedPFT&\textbf{97.2} & 47M\\
        \hline
    \end{tabular}
    \caption{\textbf{Experimental results on ViT}. The evaluation metric is accuracy.}
    \label{tab2}
\end{table}
\begin{table*}
    \centering
    %\multicolumn{3}*{SST-2} & \multicolumn{3}*{QNLI}
    \begin{tabular}{ccccccccc}
        \hline
         \multirow{2}*{Model} & \multirow{2}*{Method} & \multicolumn{3}{c}{SST-2}& \multicolumn{3}{c}{QNLI}&\multirow{2}*{\makecell[c]{Transformers\\Params}} \\
         \cline{3-8}
          &  &Dir-1.0& Dir-5.0 & Dir-10.0 &Dir-1.0& Dir-5.0 & Dir-10.0  &  \\
        \hline
        \multirow{3}*{BERT} & FedPETuning & 90.6 & 90.9 & 91.1 & 87.3 &87.9 & 88.1& 81M\\
         \cline{2-9}
         & FedOT & 89.1 & 89.9 & 90.1 & 82.4 & 82.9& 83.1& 47M \\
         
         & FedPFT&  \textbf{89.7}& \textbf{91.1} & \textbf{91.6} & \textbf{86.1} &\textbf{86.9}&\textbf{87.2} & 47M\\
        \hline
        \multirow{3}*{RoBERTa} & FedPETuning& 93.0 & 93.3 &93.5  &90.2  &90.6 &90.8 &81M\\
        \cline{2-9}
         & FedOT &  91.7& 92.1 & 92.8 & 84.1 & 84.6& 85.1&47M  \\
         & FedPFT & \textbf{92.1} & \textbf{92.7} & \textbf{93.1} &\textbf{87.2}  &\textbf{87.6} &\textbf{88.1} & 47M\\ 
         
        \hline
    \end{tabular}
    \caption{\textbf{Non-IID experimental results on BERT and RoBERTa}. The evaluation metric is accuracy.}
    \label{tab3}
\end{table*}
We first evaluate all three methods in Independent Identically Distributed (I.I.D) data distribution scenarios. Table.\ref{tab1} compares our FedPFT with two baselines for fine-tuning BERT and RoBERTa on four text datasets. We observe that: 1) the performance of all FMs fine-tuned by our FedPFT surpasses that achieved by FedOT and closely approaches the performance achieved by FedPETuning that fine-tuning using full model; and 2) FedOT exhibit a substantial performance gap compared to FedPETuning. This observed discrepancy from FedOT might be attributed to the absence of training the $Emulator$ and the accumulation of gradient errors during fine-tuning.\par
We further validate the effectiveness of FedPFT for fine-tuning CV FM. Table.\ref{tab2} presents the comparison results on the ViT model with CIFAR-10 dataset, and FedPFT still outperforms FedOT and achieves competitive performance  closer to FedPETuning. The results on other two vision datasets (i.e., CIFAR-100, Flowers) are shown in Appendix.D.
\subsection{Impact of Data Heterogeneity}
We then evaluate the performance of FMs fine-tuned by three different methods in Non-Independent Identically Distributed (Non-I.I.D) data distribution scenarios. For the datasets used in the data heterogeneity experiments, we unequally partition the dataset into 100 clients following the label distribution $Y_{i}\backsim Dir(\alpha p)$, where $i$ is the client id, $p$ is the global label distribution, $\alpha$ is the degree of Non-I.I.D and a smaller $\alpha$ generates a high label distribution shift. We construct three different label-skewing scenarios by adjusting the value of $\alpha$: Dir-1.0, Dir-5.0, and Dir-10.0. Table.\ref{tab3} shows the comparison of three methods for fine-tuning BERT and RoBERTa in different data-heterogeneous scenarios. We observe that: 1) the performance of all methods declines as the degree of Non-I.I.D increases; 2) our FedPFT still outperforms FedOT and achieves competitive performance closer to FedPETuning. Visualisation of label distributions and results of Non-I.I.D experiments on vision datasets are shown in the Appendix.E.
\subsection{Ablation Study}
\begin{table}[]
    \centering
    \begin{tabular}{ccc}
    \hline
         \multirow{2}*{Method}& \multicolumn{2}{c}{Model} \\
    \cline{2-3}
         &BERT&RoBERTa \\
    \hline
         FedOT&83.9 & 85.3 \\
         FedPFT\_N& 83.0 & 78.9\\
         FedPFT\_B&86.9 & 86.7 \\
         FedPFT\_D&84.3 & 85.5 \\
         FedPFT(ours)& \textbf{87.5}& \textbf{88.7}\\
    \hline
    \end{tabular}
    \caption{\textbf{Ablation study of FedPFT} on QNLI}
    \label{tab4}
\end{table}
To evaluate the efficacy of individual components within FedPFT, we design the following variants for conducting ablation study:
\begin{itemize}
    \item FedPFT\_N which does not perform alignment by knowledge distillation;
    \item FedPFT\_B which perform sub-FM alignment by knowledge distillation only before FL fine-tuning;
    \item FedPFT\_D which perform sub-FM alignment by knowledge distillation only during FL fine-tuning;
\end{itemize}
Moreover, to validate the effectiveness of the sub-FM construction module of FedPFT, we list the experimental results of FedOT for comparison with FedPFT\_B, as both perform sub-FM alignment by knowledge distillation only before FL fine-tuning. Results are shown in Table.\ref{tab4} and we observe that: 1) both FedPFT\_N which does not align the sub-FMs entirely and FedPFT\_D which lacks the alignment before FL fine-tuning, exhibit notably poor performance. This is consistent with our theoretical analysis (see Theorem.\ref{theorem_1}), indicating that a significant disparity between the gradients of sub-FM and FM can impede the convergence of fine-tuning methods using proxy sub-model; 2) FedPFT\_B outperforms FedOT, showing the effectiveness of the sub-FM construction module in FedPFT; and 3) FedPFT outperforms FedPFT\_B, emphasizing the necessity of sub-FM alignment during FL fine-tuning. Results on other text and vision datasets are presented in Appendix.F.
\subsection{Parameter Study}
\begin{table}[]
    \centering
    \begin{tabular}{cccc}
    \hline
         \multicolumn{2}{c}{\multirow{2}*{Hyper-Parameter}}& \multicolumn{2}{c}{Model} \\
    \cline{3-4}
          & & BERT & RoBERTa \\
    \hline
        \multirow{3}*{\makecell[c]{Alignment\\interval $t$}} &5 & 87.3& 88.4\\
         &10 & \textbf{87.5}& \textbf{88.7} \\
         &20 &87.1 & 87.1 \\
    \hline
        \multirow{3}*{\makecell[c]{Updated neurons\\proportion $p$}} &0.3 & 87.4& 87.8\\
         &0.5 &\textbf{87.5} & \textbf{88.7} \\
         &0.8 &87.1 & 87.9 \\
    \hline
    \end{tabular}
    \caption{\textbf{Parameter study of FedPFT} on QNLI}
    \label{tab5}
\end{table}
% \begin{table}[]
%     \centering
%     \begin{tabular}{ccc}
%     \hline
%          \multirow{2}*{\makecell[c]{Updated neurons\\proportion $p$}}& \multicolumn{2}{c}{Model} \\
%          \cline{2-3}
%          & BERT & RoBERTa \\
%     \hline
%          0.3 & & \\
%          0.5 & &  \\
%          0.8 & &  \\
%     \hline
%     \end{tabular}
%     \caption{Effect of $p$ on QNLI}
%     \label{tab6}
% \end{table}
In addition, we conduct a study to investigate the impact of two hyper-parameters in the sub-FM alignment module: the interval $t$ between two alignments during FL fine-tuning and the proportion $p$ of neurons that need to be updated for each alignment. The effects of two hyper-parameters on the QNLI dataset are presented in Table.\ref{tab5}, suggesting that both $t$ and $p$ should be chosen moderately. Regarding $t$, longer alignment intervals lead to the accumulation of gradient errors, while shorter intervals can prohibit the adaptation of downstream tasks. Concerning $p$, it is crucial to strike a balance between updating an adequate proportion of neurons and avoiding excessive disruption to local fine-tuning.
\section{Conclusion}
This paper introduces FedPFT, a federated fine-tuning framework designed for Foundation Models (FMs). FedPFT addresses critical challenges related to insufficient FM fine-tuning and the accumulation of gradient errors by employing layer-wise compression for sub-FM construction and aligning sub-FM through a two-step distillation process, respectively. This novel framework achieves optimal downstream task adaptation of FM, resulting in an effective fine-tuned FM with superior performance, all without direct sharing of either server FM or client data. Experimental results across seven datasets showcase the effectiveness of FedPFT. In the future, we aim to extend the application of FedPFT to larger-scale FMs for tackling more complex downstream tasks.
%\newpage

\section{Acknowledgment}
The research was supported by Natural Science Foundation of China (62272403).
\bibliographystyle{named}
\bibliography{reference}

\newpage
\appendix

\section{Proof of Theorem 1}
\begin{proof}
    Our assumption that $\triangledown f$ is Lipschitz continuous with constant $L_{1}$ implies that $\triangledown^{2} f(x) \preceq L_{1}I$, or equivalently that $\triangledown^{2} f(x)-L_{1}I$ is a negative semidefinite matrix. Using this fact, we can perform a quadratic expansion of $f$ around $f(x)$ and obtain the following inequality:
    \begin{equation}
        \begin{aligned}
        f(y) &\leq f(x) + \triangledown f(x)^{T}(y-x)+\frac{1}{2}\triangledown^{2}f(x)\|y-x\|_{2}^{2}\\
        &\leq f(x) + \triangledown f(x)^{T}(y-x)+\frac{L_{1}}{2}\|y-x\|_{2}^{2}
    \end{aligned}
    \end{equation}
    Since we run gradient descent on $f'$ and synchronize the gradient to $f$ and $\triangledown f' - \triangledown f=\delta$, let's plug in the gradient descent update by letting $y=x^{+}=x-\eta\triangledown f'(x)=x-\eta\triangledown f(x)-\eta\delta$. We then get:
    \begin{equation}
    \label{appendix_eq2}
    \begin{aligned}
            f(x^{+})&\leq f(x) + \triangledown f(x)^{T}(x^{+}-x)+\frac{L_{1}}{2}\|x^{+}-x\|_{2}^{2}\\
            f(x^{+})&\leq f(x)+\triangledown f(x)^{T}(-\eta\triangledown f(x)-\eta\delta)\\
            &+\frac{L_{1}}{2}\|\eta\triangledown f(x)+\eta\delta\|_{2}^{2}\\
            f(x^{+})&\leq f(x)-\eta(\|\triangledown f(x)\|_{2}^{2}+\triangledown f(x)^{T}\delta)\\
            &+\frac{L_{1}\eta^{2}}{2}\|\triangledown f(x)+\delta\|_{2}^{2}\\
            f(x^{+})&\leq f(x)-\eta(\|\triangledown f(x)\|_{2}^{2}+\triangledown f(x)^{T}\delta)\\
            &+\frac{L_{1}\eta^{2}}{2}(\|\triangledown f(x)\|_{2}^{2}+\|\delta\|_{2}^{2})\\
            f(x^{+})&\leq f(x)-(1-\frac{L_{1}\eta}{2})\eta\|\triangledown f(x)\|_{2}^{2}-\eta\triangledown f(x)^{T}\delta\\
            &+\frac{L_{1}\eta^{2}}{2}\|\delta\|_{2}^{2}
        \end{aligned}
    \end{equation}
    Using $\eta \leq \frac{1}{L_{1}}$, we know that $-(1-\frac{L_{1}\eta}{2})\leq-\frac{1}{2}$ and $\frac{L_{1}\eta^{2}}{2}\leq\frac{\eta}{2}$. Plugging them into (\ref{appendix_eq2}), we can get the following:
    \begin{equation}
    \label{appendix_eq3}
    \begin{aligned}
            f(x^{+})&\leq f(x) - \frac{\eta}{2}\|\triangledown f(x)\|_{2}^{2}-\eta\triangledown f(x)^{T}\delta+\frac{\eta}{2}\|\delta\|_{2}^{2}\\
            &=f(x)-\frac{\eta}{2}(\|\triangledown f(x)+\delta\|_{2}^{2}-2\|\delta\|_{2}^{2})\\
            &=f(x)-\frac{\eta}{2}(\|\triangledown f'(x)\|_{2}^{2}-2\|\delta\|_{2}^{2})
        \end{aligned}
    \end{equation}
    If we have 
    \begin{align}
    \label{appendix_eq4}
        \|\delta\|_{2}^{2}& \textless \frac{1}{2}\|\triangledown f'\|_{2}^{2},
    \end{align}
    then 
    \begin{align}
        f(x^{+})&\leq f(x).
    \end{align}
    This implies that when (\ref{appendix_eq4}) is satisfied, the value of $f$ strictly decreases with each iteration of gradient decent of $f'$ until it reaches the optimal value $f(x)=f(x^{*})$.\par
    Next, we can bound $f(x^{+})$, the objective value at the next iteration, in terms of $f(x^{*})$, the optimal objective value. Since $f$ is convex, we can write:
    \begin{align}
            &f(x^{*})\geq f(x)+\triangledown f(x)^{T}(x^{*}-x),\\
            &f(x)\leq f(x^{*})+\triangledown f(x)^{T}(x-x^{*}),
    \end{align}
    plugging them into (\ref{appendix_eq3}), we obtain:
    \begin{equation}
        \begin{aligned}
            f(x^{+})\leq f(x^{*})&+\triangledown f(x)^{T}(x-x^{*})\\&
            -\frac{\eta}{2}(\|\triangledown f'(x)\|_{2}^{2}-2\|\delta\|_{2}^{2})\\
            f(x^{+})-f(x^{*})&\leq (\triangledown f'(x)-\delta)^{T}(x-x^{*})\\
            &-\frac{\eta}{2}(\|\triangledown f'(x)\|_{2}^{2}-2\|\delta\|_{2}^{2})\\
            f(x^{+})-f(x^{*})&\leq \frac{1}{2\eta}(2\eta\triangledown f'(x)^{T}(x-x^{*})-\eta^{2}\|\triangledown f'(x)\|_{2}^{2})\\
            &-\delta^{T}(x-x^{*})+\eta\|\delta\|_{2}^{2}\\
            f(x^{+})-f(x^{*})&\leq\frac{1}{2\eta}(2\eta\triangledown f'(x)^{T}(x-x^{*})-\eta^{2}\|\triangledown f'(x)\|_{2}^{2}\\
            &-\|x-x^{*}\|_{2}^{2})+\frac{1}{2\eta}\|x-x^{*}\|_{2}^{2}\\
            &-\delta^{T}(x-x^{*})+\eta\|\delta\|_{2}^{2}\\
            f(x^{+})-f(x^{*})&\leq-\frac{1}{2\eta}\|x-x^{*}-\eta\triangledown f'(x)\|_{2}^{2}+\frac{1}{2\eta}\|x-x^{*}\|_{2}^{2}\\
            &-\delta^{T}(x-x^{*})+\eta\|\delta\|_{2}^{2}\\
            f(x^{+})-f(x^{*})&\leq \frac{1}{2\eta}(\|x-x^{*}\|_{2}^{2}-\|x^{+}-x^{*}\|_{2}^{2})\\
            &-\delta^{T}(x-x^{*})+\eta\|\delta\|_{2}^{2}\\
        \end{aligned}
    \end{equation}
    This inequality holds for $x^{+}$ on every update iteration. Summing over iterations, we get:
    \begin{equation}
        \begin{aligned}
                \sum_{i=1}^{k}f(x^{(i)})-f(x^{*})&\leq\sum_{i=1}^{k}\frac{1}{2\eta}(\|x^{(i-1)}-x^{*}\|_{2}^{2}-\|x^{(i)}-x^{*}\|_{2}^{2})\\
                &-\delta^{(i)T}(x^{(i)}-x^{*})+\eta\|\delta^{(i)}\|_{2}^{2}\\
                \sum_{i=1}^{k}f(x^{(i)})-f(x^{*})&\leq\frac{1}{2\eta}(\|x^{(0)}-x^{*}\|_{2}^{2}-\|x^{(k)}-x^{*}\|_{2}^{2})\\
                &+\sum_{i=1}^{k}\eta\|\delta^{(i)}\|_{2}^{2}-\delta^{(i)T}(x^{(i)}-x^{*})\\
                \sum_{i=1}^{k}f(x^{(i)})-f(x^{*})&\leq\frac{1}{2\eta}\|x^{(0)}-x^{*}\|_{2}^{2}\\
                &+\sum_{i=1}^{k}\eta\|\delta^{(i)}\|_{2}^{2}-\delta^{(i)T}(x^{(i)}-x^{*})\\
            \end{aligned}
    \end{equation}
    Then, using the fact that $f$ decreasing on every iteration, we can conclude that:
    \begin{equation}
        \begin{aligned}
            f(x^{(k)})-f(x^{*})&\leq\frac{1}{k}\sum_{i=1}^{k}f(x^{(i)})-f(x^{*})\\
            f(x^{(k)})-f(x^{*})&\leq\frac{1}{2\eta k}\|x^{(0)}-x^{*}\|_{2}^{2}\\
                &+\frac{1}{k}\sum_{i=1}^{k}\eta\|\delta^{(i)}\|_{2}^{2}-\delta^{(i)T}(x^{(i)}-x^{*})
        \end{aligned}
    \end{equation}
    If we have
    \begin{equation}
        \begin{aligned}
        \label{appendix_eq11}\eta\sum_{i=1}^{k}\|\delta^{(i)}\|_{2}^{2}&\leq \sum_{i=1}^{k}\langle \delta^{(i)}, x^{(i)}-x^{*} \rangle,
        \end{aligned}
    \end{equation}
    then
    \begin{equation}
        \begin{aligned}
            f(x^{(k)})-f(x^{*}) \leq \frac{\|x^{(0)}-x^{*}\|_{2}^{2}}{2\eta k}
        \end{aligned}
    \end{equation}
    This implies that when (\ref{appendix_eq11}) is satisfied, gradient descent of $f$ with the help of $f'$ is guaranteed to converge and converges with rate $O(\frac{1}{k})$.
\end{proof}

% \begin{equation}
%     \resizebox{.91\linewidth}{!}{$
%             \displaystyle
%             x = \prod_{i=1}^n \sum_{j=1}^n j_i + \prod_{i=1}^n \sum_{j=1}^n i_j + \prod_{i=1}^n \sum_{j=1}^n j_i + \prod_{i=1}^n \sum_{j=1}^n i_j + \prod_{i=1}^n \sum_{j=1}^n j_i
%         $}.
% \end{equation}%

\section{Proof of Theorem 2}
\begin{proof}
    Based on the premise, we have:
    \begin{equation}
        \begin{aligned}
            loss&=f(y),\quad y=xAx^{T}xBC;\\
            loss'&=f(y'),\quad y'=xAx^{T}xBC'.
        \end{aligned}
    \end{equation}
    According to the chain rule, we can obtain:
    \begin{equation}
        \begin{aligned}
            \frac{\partial loss}{\partial A}&=x^{T}\frac{\partial loss}{\partial y}C^{T}B^{T}x^{T}x,\\
            \frac{\partial loss'}{\partial A}&=x^{T}\frac{\partial loss'}{\partial y'}C'^{T}B^{T}x^{T}x,
        \end{aligned}
    \end{equation}
    then
    \begin{equation}
        \begin{aligned}
        \label{appendix_eq16}
            \|\frac{\partial loss'}{\partial A}-\frac{\partial loss}{\partial A}\|_{2}^{2}&=\|x^{T}(\frac{\partial loss}{\partial y}C^{T}-\frac{\partial loss'}{\partial y'}C'^{T})B^{T}x^{T}x\|_{2}^{2}\\
            \|\frac{\partial loss'}{\partial A}-\frac{\partial loss}{\partial A}\|_{2}^{2}&\leq\|x\|_{2}^{6}\cdot\|B\|_{2}^{2}\cdot\|\frac{\partial loss}{\partial y}C^{T}-\frac{\partial loss'}{\partial y'}C'^{T}\|_{2}^{2}\\
            \|\frac{\partial loss'}{\partial A}-\frac{\partial loss}{\partial A}\|_{2}^{2}&\leq\|x\|_{2}^{6}\cdot\|B\|_{2}^{2}\cdot\|\frac{\partial loss}{\partial y}C^{T}-\frac{\partial loss'}{\partial y'}C^{T}\\
            &+\frac{\partial loss'}{\partial y'}C^{T}-\frac{\partial loss'}{\partial y'}C'^{T}\|_{2}^{2}\\
            \|\frac{\partial loss'}{\partial A}-\frac{\partial loss}{\partial A}\|_{2}^{2}&\leq\|x\|_{2}^{6}\cdot\|B\|_{2}^{2}\cdot\|C\|_{2}^{2}\cdot\|\frac{\partial loss}{\partial y}-\frac{\partial loss'}{\partial y'}\|_{2}^{2}\\
            &+\|x\|_{2}^{6}\cdot\|B\|_{2}^{2}\cdot\|\frac{\partial loss'}{\partial y'}\|_{2}^{2}\cdot\|C-C'\|_{2}^{2}.
        \end{aligned}
    \end{equation}
    Additionally, our assumption that the gradient of loss function $loss=f(y)$ is Lipschitz continuous with constant $L_{3} \textgreater 0$ implies that 
    \begin{equation}
        \begin{aligned}
            \|\frac{\partial loss}{\partial y}-\frac{\partial loss'}{\partial y'}\|_{2}^{2}\leq L_{3}\|y-y'\|_{2}^{2}.
        \end{aligned}
    \end{equation}
    Plugging this into (\ref{appendix_eq16}), we have
    \begin{equation}
        \begin{aligned}
            \|\frac{\partial loss'}{\partial A}-\frac{\partial loss}{\partial A}\|_{2}^{2}&\leq L_{3}\|x\|_{2}^{6}\cdot\|B\|_{2}^{2}\cdot\|C\|_{2}^{2}\cdot\|y-y'\|_{2}^{2}\\
            &+\|x\|_{2}^{6}\cdot\|B\|_{2}^{2}\cdot\|\frac{\partial loss'}{\partial y'}\|_{2}^{2}\cdot\|C-C'\|_{2}^{2}.
        \end{aligned}
    \end{equation}
    let 
    \begin{equation}
        \begin{aligned}
            K_{1}&=L_{3}\|x\|_{2}^{6}\cdot\|B\|_{2}^{2}\cdot\|C\|_{2}^{2}\\
            K_{2}&=\|x\|_{2}^{6}\cdot\|B\|_{2}^{2}\cdot\|\frac{\partial loss'}{\partial y'}\|_{2}^{2}
        \end{aligned}
    \end{equation}
    then we have
    \begin{equation}
        \begin{aligned}
            \|\frac{\partial loss'}{\partial A}-\frac{\partial loss}{\partial A}\|_{2}^{2} &\leq K_{1}\epsilon_{1} + K_{2}\epsilon_{2}
        \end{aligned}
    \end{equation}
    Similarly, it can be derived that there exists the constant $K_{3} \textgreater 0$ and $K_{4}\textgreater 0$ such that:
    \begin{equation}
        \begin{aligned}
            \|\frac{\partial loss'}{\partial B}-\frac{\partial loss}{\partial B}\|_{2}^{2} &\leq K_{3}\epsilon_{1} + K_{4}\epsilon_{2}.
        \end{aligned}
    \end{equation}
    Let $K_{1}=\max(K_{1}, K_{3})$, $K_{2}=\max(K_{2},K_{4})$, then Theorem 2 is proved.
\end{proof}

\section{Experimental Details}
\subsection{Datasets}
\subsubsection{SST-2} The Stanford Sentiment Treebank \cite{socher2013recursive} is a binary single-sentence classification task consisting of sentences extracted from movie reviews with human annotations of their sentiment. It is consists of a training set of 67350 examples, a development set of 873 examples, and a test set of 1821 examples.
\subsubsection{QNLI} Question Natural Language Inference is a version of the Stanford Question Answering Dataset which has been converted to a binary classification task \cite{wang2018glue}. It is consists of a training set of 104743 examples, a development set of 5463 examples, and a test set of 5461 examples.
\subsubsection{MNLI} Multi-Genre Natural Language Inference is a large-scale, crowdsourced entailment classification task \cite{williams2017broad}. It is consists of a training set of 392702 examples, a matched development set of 9815 examples, a mismatched development set of 9832 examples, a matched test set of 9796 examples, and a mismatched test set of 9847 examples. 
\subsubsection{QQP} Quora Question Pairs is a binary classification task where the goal is to determine if two questions asked on Quora are semantically equivalent \footnote{https://quoradata.quora.com/First-Quora-Dataset-Release-Question-Pairs}. It is consists of a training set of 363870 examples, a development set of 40431 examples, and a test set of 390965 examples.
\subsubsection{CIFAR-10 and CIFAR-100}The CIFAR-10 and CIFAR-100 are labeled subsets of the 80 million tiny images dataset \cite{birhane2021large}. The CIFAR-10 dataset consists of 60000 32x32 colour images in 10 classes, with 6000 images per class, and there are 50000 training images and 10000 test images. The CIFAR-100 dataset is just like the CIFAR-10, except it has 100 classes containing 600 images each, and there are 500 training images and 100 testing images per class. The 100 classes in the CIFAR-100 are grouped into 20 superclasses. Each image comes with a "fine" label (the class to which it belongs) and a "coarse" label (the superclass to which it belongs). 
\subsubsection{Flowers} Oxford 102 Flower is an image classification dataset consisting of 102 flower categories. The flowers were chosen to be flowers commonly occurring in the United Kingdom. Each class consists of between 40 and 258 images. It is consists of a training set of 6149 examples, a development set of 1020 examples, and a test set of 1020 examples.
\subsection{Hyper-parameters}
% In the FL scenario, we set up 100 clients with 500 total communication rounds and employ the Dirichlet data partition method \cite{hsu2019measuring} to construct different label-skew data heterogeneity scenarios. In each communication round, we randomly select 10 clients for local fine-tuning, using a linear decay of the global learning rate over rounds and AdamW as the local fine-tuning optimizer. We use FedAvg \cite{mcmahan2017communication} for global model aggregation. For FedOT, following \cite{xiao2023offsite}, we use 2 layers at the bottom and 2 layers at the top as $Adapter$, and compress the intermediate 8 layers into 3 layers as $Emulator$. For FedPFT, we construct sub-FMs by performing layer-wise compression on the intermediate 10 layers. For fair comparison, we keep the number of trainable parameters the same for all three methods. The evaluation metric is the accuracy on the given validation set. All pre-trained models used in experiments are obtained from Hugging Face\footnote{https://huggingface.co/}. The FL scenarios were implemented using FLGo \cite{wang2023flgo}, and all experiments are conducted in PyTorch 2.1 and NVIDIA 3090 GPUs.

The number of distillation epochs for alignment before FL training for NLP sub-FMs and CV sub-FM is 5, 50, respectively. For distillation of NLP FMs, the learning rate is $6e-4$, the batch zise is 2048, and the weight decay is 0.01. For distillation of CV FMs, the learning rate is $1e-3$, the batch size is 4096, and the weight decay is 0.1. All distillation use the linear learning rate decay with warm up ratio of 0.06 and the gradient clipping with max grad norm of 1. For client local fine-tuning, the fine-tuning method is Lora \cite{hu2021lora}, the fine-tuning epoch is 1. For text datasets, we perform a grid search to find the optimal parameters, with batch size$\in \{8, 16, 32\}$ and learning rate$\in \{4e-4,8e-4,1e-3\}$. For CIFAR-10 and CIFAR-100 datasets, we perform a grid search to find the optimal parameters, with batch size$\in \{64, 128, 256\}$ and learning rate$\in \{3e-3,8e-3,1e-2\}$. For Flowers dataset, we perform a grid search to find the optimal parameters, with batch size$\in \{16, 32, 64\}$ and learning rate$\in \{3e-3,8e-3,1e-2\}$. For FedPFT, alignment is performed every 10 rounds during FL training, with the number of alignment epochs for NLP and CV being 0.02 and 0.2, respectively, and the proportion of neurons that need to be updated during alignment being 0.5. The ratio of eliminated neurons in FFN is fixed at 0.75.
\section{Experimental results on I.I.D scenario}
The I.I.D experiment results on CIFAR-100 and Flowers datasets are show in Table.\ref{tab6}. It is shown that FedPFT still outperforms FedOT and achieves competitive performance  closer to FedPETuning.

\section{Experimental results on Non-I.I.D scenario}
\begin{figure*}
\centering
    \subfigure[SST-2 Dir-1.0]{
    \centering
    \includegraphics[scale=0.333]{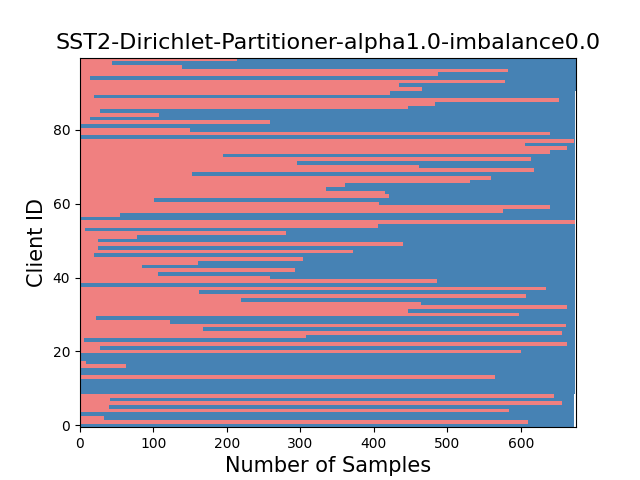}
    \label{fig4_a}
    }
    \subfigure[SST-2 Dir-5.0]{
    \centering
    \includegraphics[scale=0.333]{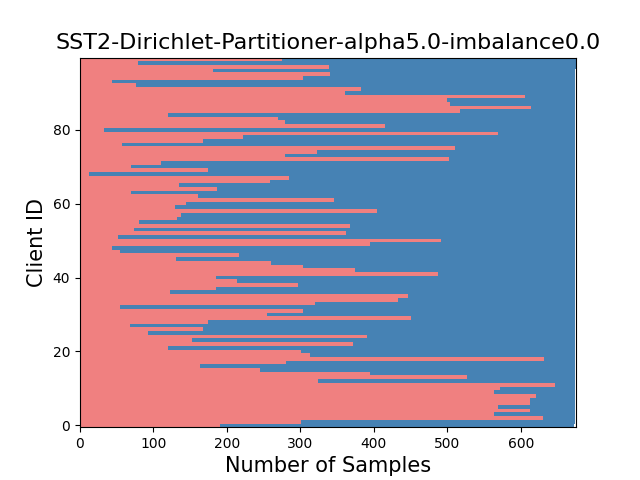}
    \label{fig4_b}
    }
    \subfigure[SST-2 Dir-10.0]{
    \centering
    \includegraphics[scale=0.333]{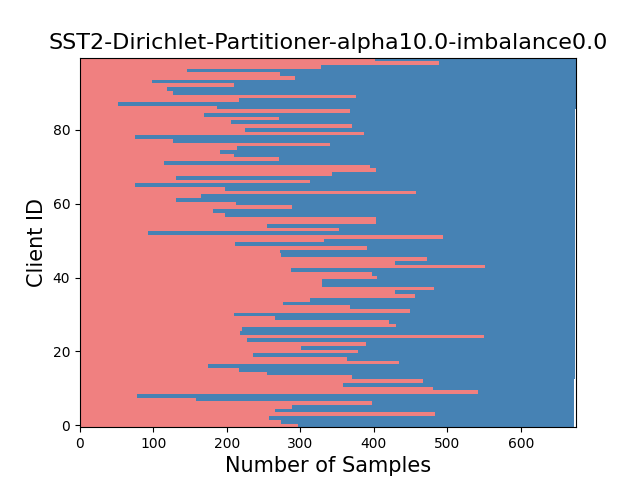}
    \label{fig4_c}
    }
    \subfigure[QNLI Dir-1.0]{
    \centering
    \includegraphics[scale=0.333]{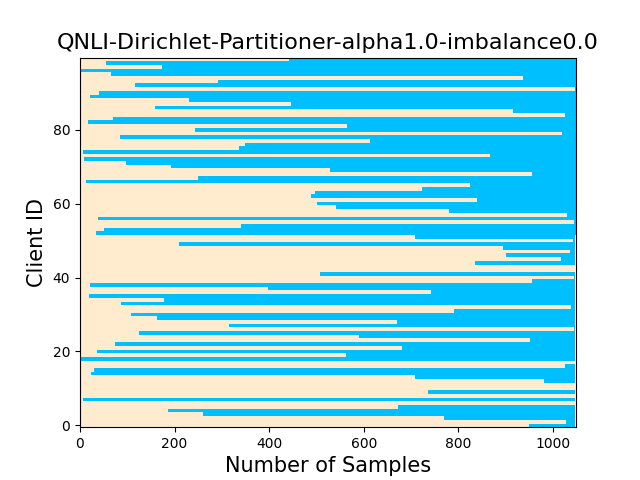}
    \label{fig5_a}
    }
    \subfigure[QNLI Dir-5.0]{
    \centering
    \includegraphics[scale=0.333]{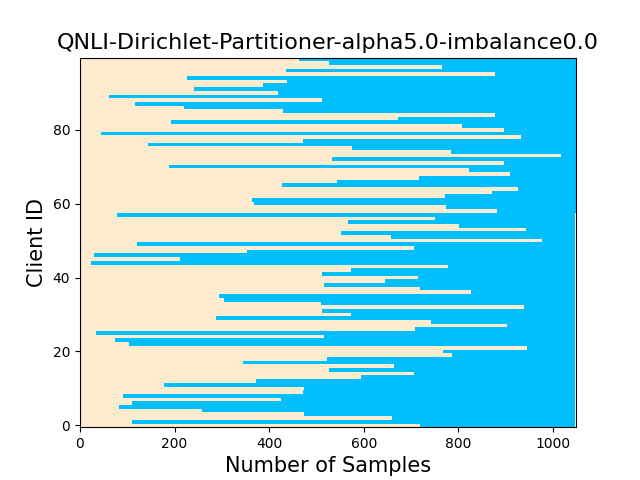}
    \label{fig5_b}
    }
    \subfigure[QNLI Dir-10.0]{
    \centering
    \includegraphics[scale=0.333]{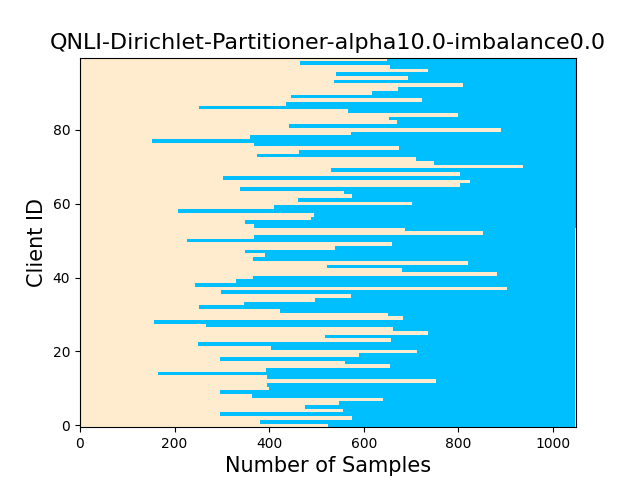}
    \label{fig5_c}
    }
    \subfigure[CIFAR-10 Dir-1.0]{
    \centering
    \includegraphics[scale=0.333]{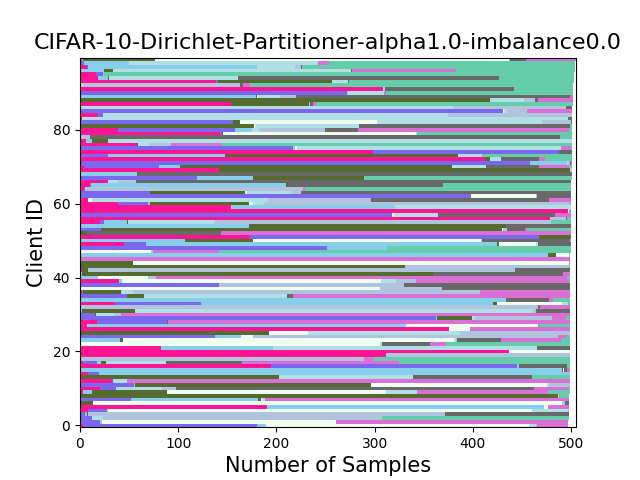}
    \label{fig6_a}
    }
    \subfigure[CIFAR-10 Dir-5.0]{
    \centering
    \includegraphics[scale=0.333]{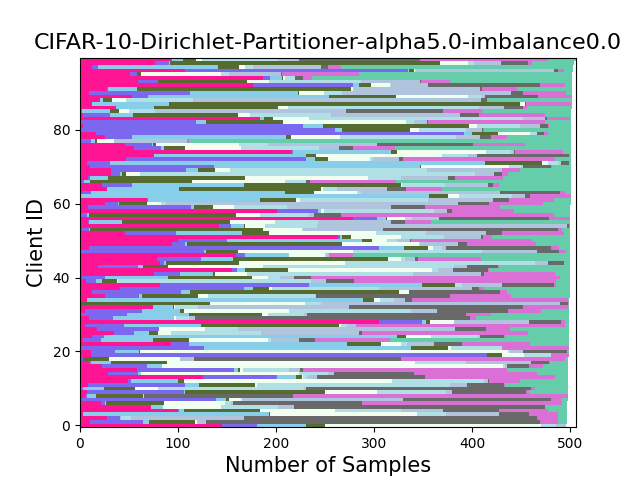}
    \label{fig6_b}
    }
    \subfigure[CIFAR-10 Dir-10.0]{
    \centering
    \includegraphics[scale=0.333]{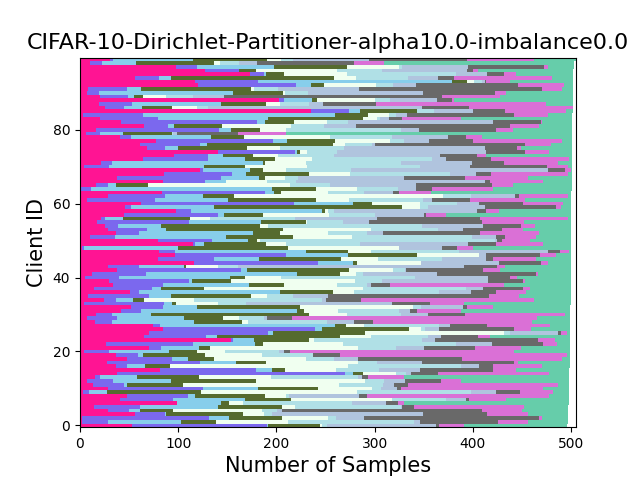}
    \label{fig6_c}
    }
    \caption{Visualisations of the label distributions of SST-2, QNLI, and CIFAR-10}
    \label{fig4}
\end{figure*}
Visualisations of the label distributions of the three datasets, SST-2, QNLI and CIFAR-10, under different Non-I.I.D scenarios are shown in Fig.\ref{fig4}. The results of Non-I.I.D experiments on CIFAR-10 are shown in Table.\ref{tab7}.  Similarly, We observe that: 1) the performance of all methods declines as the degree of Non-I.I.D increases; 2) our FedPFT still outperforms FedOT and achieves competitive performance closer to FedPETuning.
\begin{table}
    \centering
    \begin{tabular}{cccc}
    \hline
         Method&CIFAR-100&Flowers & \makecell[c]{Transformers\\Params}\\
    \hline
         FedPETuning &89.3 &99.7& 81M\\
         \cline{1-4}
         FedOT&81.1& 82.1& 47M\\
         FedPFT&\textbf{85.5}&\textbf{98.1} & 47M\\
        \hline
    \end{tabular}
    \caption{\textbf{Other I.I.D experimental results on ViT}. A higher value indicates better accuracy.}
    \label{tab6}
\end{table}
\begin{table}
    \centering
    %\multicolumn{3}*{SST-2} & \multicolumn{3}*{QNLI}
    \begin{tabular}{ccccc}
        \hline
          \multirow{2}*{Method} & \multicolumn{3}{c}{CIFAR-10}&\multirow{2}*{\makecell[c]{Transformers\\Params}} \\
         \cline{2-4}
           &Dir-1.0& Dir-5.0 & Dir-10.0 &  \\
        \hline
        FedPETuning & 97.8 & 98.0 &98.2 & 81M\\
         \cline{1-5}
         FedOT & 95.1 &95.4 & 95.5& 47M \\
         
         FedPFT&  \textbf{96.5} &\textbf{96.7}&\textbf{97.1} & 47M\\

        \hline
    \end{tabular}
    \caption{\textbf{Non-IID experimental results on CIFAR-10}. A higher value indicates better accuracy.}
    \label{tab7}
\end{table}
\section{Ablation study on other datasets}
The experiment results of ablation study on SST-2 and CIFAR-10 are shown in Table.\ref{tab8}. The experimental results exemplify the respective importance of the sub-FM Construction Module and the Sub-FM Alignment Module in FedPFT.
\begin{table}[]
    \centering
    \begin{tabular}{cccc}
    \hline
         Dataset&\multicolumn{2}{c}{SST-2}&CIFAR-10\\
    \hline
         Model&BERT&RoBERTa&ViT \\
    \hline
         FedOT&90.4 &92.8 &95.5 \\
         FedPFT\_N&84.2 & 89.1&95.1 \\
         FedPFT\_B& 90.8&92.9 &96.9  \\
         FedPFT\_D& 88.4& 89.7&96.0\\
         FedPFT(ours)& \textbf{91.6}& \textbf{93.1}&\textbf{97.2}\\
    \hline
    \end{tabular}
    \caption{\textbf{Ablation study of FedPFT} on SST-2 and CIFAR-10.}
    \label{tab8}
\end{table}

\section{Extension experimenal results}
We extended the experiment in three parts: parameter study on CIFAR-10 (shown in Table.\ref{tab9}), a new evaluation metric on QQP (shown in Table.\ref{tab10}), and convergence analysis of FedPFT (shown in Table.\ref{tab11}).
\begin{table}[H]
    \centering
    \begin{tabular}{ccccccc}
    \hline
         \multirow{2}*{Hyper-Params}&\multicolumn{3}{c}{$t$}& \multicolumn{3}{c}{$p$} \\
    \cline{2-7}
    &5&10 & 20& 0.1&0.5&0.9 \\
         % Hyper-Params&$t$=5&$t$=10 & $t$=20& $p$=0.1&$p$=0.5&$p$=0.9 \\
    \hline
        Accuracy&96.9&\textbf{97.2} & 97.1& 96.9&\textbf{97.2}&96.6 \\
    \hline
    \end{tabular}
    \caption{\textbf{Parameter study of FedPFT} on CIFAR-10.}
    \label{tab9}
\end{table}

\begin{table}[H]
    \centering
    \begin{tabular}{ccccc}
    \hline
         \multirow{2}*{Method}&\multicolumn{2}{c}{BERT}&\multicolumn{2}{c}{RoBERTa}\\
        \cline{2-5}
        &F1&rounds&F1&rounds \\
    \hline
         FedPETuning&83.5& 235 & 84.7&116 \\
        \hline
         FedOT& 77.9 &287& 77.7&\textbf{80}\\
         FedPFT& \textbf{81.5}&\textbf{274}& \textbf{82.6}&98\\
    \hline
    \end{tabular}
    \caption{\textbf{F1 score of three methods} on QQP. 'rounds' is total communication rounds of required for the model to converge.}
    \label{tab10}
\end{table}

\begin{table}[H]
    \centering
    \begin{tabular}{ccccc}
    \hline
         Method&SST-2&QNLI&QQP&MNLI \\
    \hline
         FedPETuning&18 & 6 &4 &11 \\
        \hline
         FedOT& 34 &128& 287&184\\
         FedPFT&\textbf{21} &\textbf{31}& \textbf{24}&\textbf{23}\\
    \hline
    \end{tabular}
    \caption{\textbf{Communication rounds required to achieve the same performance by all methods}. We list the communication rounds
required for all methods to achieve the same performance as FedOT.}
    \label{tab11}
\end{table}

\end{document}